\documentclass[runningheads]{llncs}
\usepackage{caption}
\usepackage{subcaption}
\usepackage{booktabs}
\usepackage[
  colorlinks=false,
  pdfborder={0 0 1},
  citebordercolor={0 1 0}
]{hyperref}\usepackage{graphicx}
\usepackage[ruled,vlined]{algorithm2e}
\usepackage{tikz}
\usepackage{xcolor}
\usepackage{listings}
\usepackage{xcolor}
\usepackage{hyperref}

\usepackage{pdflscape}
\usetikzlibrary{positioning}       
\usepackage{amsmath}
\usepackage{listings}
\usepackage{longtable}
\usepackage{amssymb}
\usetikzlibrary{shapes.geometric,automata,arrows.meta, positioning, fit, backgrounds, calc, decorations.pathreplacing}

\usepackage{appendix}
\title{Warm Starting State-Space Models with Automata Learning}
\author{William Fishell\inst{1} \and Sam Nicholas Kouteili\inst{2} \and 
 Mark Santolucito\inst{1}$^\dagger$}
\authorrunning{Fishell et al.}

\institute{
Columbia University ($\dagger$ Barnard College, Columbia University) \and
Yale University 
}

\begin{document}
\maketitle
\begin{abstract}
We prove that Moore machines can be exactly realized as state-space models (SSMs), establishing a formal correspondence between symbolic automata and these continuous machine learning architectures.
These Moore-SSMs preserve both the complete symbolic structure and input-output behavior of the original Moore machine, but operate in Euclidean space.
With this correspondence, we compare the training of SSMs with both passive and active automata learning.
In recovering automata from the SYNTCOMP benchmark, we show that SSMs require orders of magnitude more data than symbolic methods and fail to learn state structure. 
This suggests that symbolic structure provides a strong inductive bias for learning these systems.
We leverage this insight to combine the strengths of both automata learning and SSMs in order to learn complex systems efficiently. 
We learn an adaptive arbitration policy on a suite of arbiters from SYNTCOMP and show that initializing SSMs with symbolically-learned approximations learn both faster and better.
We see 2-5 times faster convergence compared to randomly initialized models and better overall model accuracies on test data.
Our work lifts automata learning out of purely discrete spaces, enabling principled exploitation of symbolic structure in continuous domains for efficiently learning in complex settings.
\end{abstract}

\section{Introduction}
Symbolic learning methods such as active and passive automata learning~\cite{angluin1987learning,tirnuaucua2012survey} are highly effective for systems that admit finitely representable behavioral models. These methods have been widely applied to infer models of network and communication protocols~\cite{aichernig2022active}, where they infer behavioral models by treating the target system as a black box. Automata learning recovers symbolic structure: explicit, discrete representations of state, transitions, and outputs that fully characterize system behavior over a finite set of configurations. However, these methods face key limitations: passive learning struggles to scale to complex problems, while active learning is bottlenecked by the cost of membership and equivalence queries~\cite{aichernig2024benchmarking}. The discrete nature of automata compounds these challenges, as there is no notion of proximity between behavioral models; solving a simpler problem provides no initialization for learning a related but more complex target. Furthermore, systems that track things about cumulative quantities, such as version control histories or API request counts, are fundamentally beyond the reach of classical automata learning. These methods recover finite-state representations, yet systems where behavior depends on the entire unbounded history require infinite memory.

We address these problems by formalizing the relationship between Moore machines and state-space models (SSMs)~\cite{hamilton1994state}. SSMs are continuous-state recurrent models that process sequences through learned linear dynamics. 
 SSMs such as Mamba~\cite{gu2024mamba} have received growing interest 
as they present a computationally efficient alternative to Transformer architectures that can compute linear attention.
Beyond efficiency, we find that SSMs can realize Moore machines exactly.
Representing Moore machines as SSMs yields three benefits: we inherit symbolic structure as an inductive bias, gain access to a continuous space where proximity can be exploited, enabling gradient-based refinement toward more complex settings, and can leverage recurrence to more easily model infinite-state systems. This correspondence provides an end-to-end pipeline where automata learning extracts structure that initializes continuous models, extending the reach of symbolic methods and enabling more efficient learning in deep learning settings.

We evaluate our approach of symbolic warm-starting on a suite of arbiters from SYNTCOMP. These systems have an underlying control policy described by a finite-state arbiter. We then augment this base policy such that the arbiter must track each channel's deviation from the average number of grants. Computing this deviation requires tracking grants across the entire history, demanding theoretically infinite memory. Such systems cannot be captured exactly by finite-state machines, making them natural candidates for neural approaches initialized with symbolic structure.

To motivate this hybrid approach, we first establish the value of symbolic structure in purely finite-state settings. We evaluate sample efficiency of symbolic and neural methods on a suite of regular languages from SYNTCOMP. Active learning (L*) and passive learning (RPNI) are compared against randomly initialized SSMs trained via gradient descent. This baseline experiment measures how many samples each method requires to recover a target automaton's behavior. Automata learning achieves significantly greater sample efficiency than neural approaches trained from scratch. This motivates our use of symbolic structure as initialization for learning systems that extend beyond finite-state representations\footnote{All code available at \href{https://github.com/wfishell/Automata_SSM_Learning/tree/main}{https://github.com/wfishell/Automata\_SSM\_Learning/tree/main}}.
In all, the key contributions of this work are as follows:
\begin{enumerate}
\item To the best of our knowledge, we are the first to initialize SSMs from automata recovered via classical learning algorithms, enabling more sample-efficient learning of complex settings.
\item We formalize a proof that Moore machines admit exact realizations as SSMs, preserving both structure and behavior.
\item We present an empirical study on SYNTCOMP benchmarks showing that symbolic methods achieve orders of magnitude greater sample efficiency than gradient-trained SSMs.
\end{enumerate}

\section{Motivating Example}
\label{sec:motiv}
Cloud resource allocation is an important task in managing customer access to cloud resources by providers like Amazon Web Services (AWS)~\cite{almarhabi2024distributed}.
Modeling and verifying allocation protocols has been a focus of formal methods research~\cite{boubaker2016formal,fan2016formal,garfatta2018formal}. 
Additionally, machine learning techniques have been increasingly used for modeling more complex resource allocation schemes~\cite{zi2024time,sharara2021recurrent}. Furthermore, warm-starting—initializing a model with parameters learned from a simpler setting before training on more complex instances—has also proven effective for cloud resource allocation.
Prior work uses warm-starting in neural models for cloud resource allocation and relies on simpler neural architectures and unsupervised methods~\cite{peng2021dl2}.
Our work bridges formal and neural approaches by using symbolic structure to warm-start neural models for efficient learning of adaptive arbitration policies in cloud resource allocation.

\begin{example}[GPU Allocation System]~\label{example:GPU_Allocation}
Company A is a cloud GPU provider that has four customers and an 8-GPU cluster. Company A has a policy for granting access to their network based on a round robin arbiter with the added safety constraint that no user can request more than 25\% of the network's capacity. Once a GPU is allocated, it is live for the entire hour. Afterward, the user is kicked off the network. 
\end{example}
\begin{table}[t]
\centering
\tiny
\label{tab:gpu-allocation-combined}

\begin{subtable}[t]{0.48\linewidth}
\centering
\begin{tabular}{|l|c|c|c|c|c|}
\hline
\textbf{Day} & \textbf{Hour} & \textbf{Req.} & \textbf{Action} & \textbf{C1/C2/C3/C4} & \textbf{Tot.} \\
\hline
Mon & 00:00 & C1 & Grant & 2/0/0/0 & 2 \\
Mon & 00:00 & C1 & Grant & 2/0/0/0 & 2 \\
Mon & 00:00 & C1 & \textbf{Rej.} & 2/0/0/0 & 2 \\
Mon & 01:00 & C1 & Grant & 2/0/0/0 & 2 \\
Mon & 01:00 & C1 & Grant & 2/0/0/0 & 2 \\
Mon & 01:00 & C1 & \textbf{Rej.} & 2/0/0/0 & 2 \\
Mon & 02:00 & C1 & Grant & 1/0/0/0 & 1 \\
Mon & 03:00 & C1 & Grant & 2/0/0/0 & 2 \\
Mon & 03:00 & C1 & Grant & 2/0/0/0 & 2 \\
Mon & 04:00 & C1 & Grant & 2/0/0/0 & 2 \\
Mon & 04:00 & C1 & Grant & 2/0/0/0 & 2 \\
Mon & 04:00 & C1 & \textbf{Rej.} & 2/0/0/0 & 2 \\
Mon & 05:00 & C1 & Grant & 2/0/0/0 & 2 \\
Mon & 05:00 & C1 & Grant & 2/0/0/0 & 2 \\
Mon & 08:00 & C2 & Grant & 0/2/0/1 & 3 \\
Mon & 08:00 & C2 & Grant & 0/2/0/1 & 3 \\
Mon & 08:00 & C4 & Grant & 0/2/0/1 & 3 \\
Mon & 09:00 & C2 & Grant & 0/1/0/2 & 3 \\
Mon & 09:00 & C4 & Grant & 0/1/0/2 & 3 \\
Mon & 09:00 & C4 & Grant & 0/1/0/2 & 3 \\
Mon & 10:00 & C3 & Grant & 0/0/2/0 & 2 \\
Mon & 10:00 & C3 & Grant & 0/0/2/0 & 2 \\
Mon & 11:00 & C4 & Grant & 0/0/0/1 & 1 \\
\hline
\end{tabular}
\caption{Round robin arbiter, hard 25\% reject}
\label{tab:gpu-allocation-roundrobin}
\end{subtable}
\hfill
\begin{subtable}[t]{0.48\linewidth}
\centering
\begin{tabular}{|l|c|c|c|c|c|}
\hline
\textbf{Day} & \textbf{Hour} & \textbf{Req.} & \textbf{Action} & \textbf{C1/C2/C3/C4} & \textbf{Tot.} \\
\hline
Mon & 00:00 & C1 & Grant & 2/0/0/0 & 2 \\
Mon & 00:00 & C1 & Grant & 2/0/0/0 & 2 \\
Mon & 00:00 & C1 & \textbf{Rej.} & 2/0/0/0 & 2 \\
Mon & 01:00 & C1 & Grant & 2/0/0/0 & 2 \\
Mon & 01:00 & C1 & Grant & 2/0/0/0 & 2 \\
Mon & 01:00 & C1 & Grant & 3/0/0/0 & 3 \\
Mon & 02:00 & C1 & Grant & 1/0/0/0 & 1 \\
Mon & 03:00 & C1 & Grant & 2/0/0/0 & 2 \\
Mon & 03:00 & C1 & Grant & 2/0/0/0 & 2 \\
Mon & 04:00 & C1 & Grant & 2/0/0/0 & 2 \\
Mon & 04:00 & C1 & Grant & 2/0/0/0 & 2 \\
Mon & 04:00 & C1 & Grant & 3/0/0/0 & 3 \\
Mon & 05:00 & C1 & Grant & 2/0/0/0 & 2 \\
Mon & 05:00 & C1 & Grant & 2/0/0/0 & 2 \\
Mon & 08:00 & C2 & Grant & 0/2/0/1 & 3 \\
Mon & 08:00 & C2 & Grant & 0/2/0/1 & 3 \\
Mon & 08:00 & C4 & Grant & 0/2/0/1 & 3 \\
Mon & 09:00 & C2 & Grant & 0/1/0/2 & 3 \\
Mon & 09:00 & C4 & Grant & 0/1/0/2 & 3 \\
Mon & 09:00 & C4 & Grant & 0/1/0/2 & 3 \\
Mon & 10:00 & C3 & Grant & 0/0/2/0 & 2 \\
Mon & 10:00 & C3 & Grant & 0/0/2/0 & 2 \\
Mon & 11:00 & C4 & Grant & 0/0/0/1 & 1 \\
\hline
\end{tabular}
\caption{Synthetic data of new desired policy}
\label{tab:gpu-allocation-synthetic}
\end{subtable}

\end{table}

Company A concludes from analyzing Table~\ref{tab:gpu-allocation-roundrobin} that their static fairness policy is costing them money. Between midnight and 5 am, C1 is the sole active customer, but the rigid 25\% cap rejects requests despite abundant unused capacity. They want to model how to dynamically shift the safety constraints with the underlying arbitration structure. 

Hybrid controllers address such problems by partitioning the task across architectural components. A neural module might handle complex pattern recognition, while a symbolic module enforces hard constraints like fairness or safety. This division exploits the complementary strengths of learning and formal methods. For instance, recent work has explored hybrid architectures that combine SSMs with traditional controllers, using queue-feedback mechanisms alongside SSMs that capture dependencies over the entire input history for resource provisioning on meshed web systems~\cite{lei2022state}.
These approaches yield a system whose components act independently of one another and model different components of the problem. Because there is no universal policy, this can create catastrophic edge cases~\cite{fremont2020formal}.

We propose an alternative approach. Rather than coupling independent modules, we train a single SSM to serve as the complete allocation policy. Company A can retroactively analyze their historical data and generate synthetic traces that reflect their desired behavior: granting requests when capacity is available, regardless of static caps, while still enforcing fairness when multiple clients compete. 

The data in Table~\ref{tab:gpu-allocation-synthetic} can then be used to train an SSM that captures both the arbitration logic and the more complex history-dependent dynamics. However, randomly initialized SSMs struggle to learn such systems efficiently; symbolic warm-starting dramatically improves both convergence and accuracy. We illustrate this point in Fig.~\ref{fig:RoundRobinArbitration_Policy} where we learn a dynamic arbitration policy which behaves like a round robin arbiter with 4 channels, but must constantly keep grants within an evolving safety bounds.
The impact on warm starting shows faster convergence to a stable accuracy rate, as well as a higher overall accuracy rate even after 1000 epochs of training.

\begin{figure}[t]
    \centering
    \includegraphics[width=0.9\linewidth]{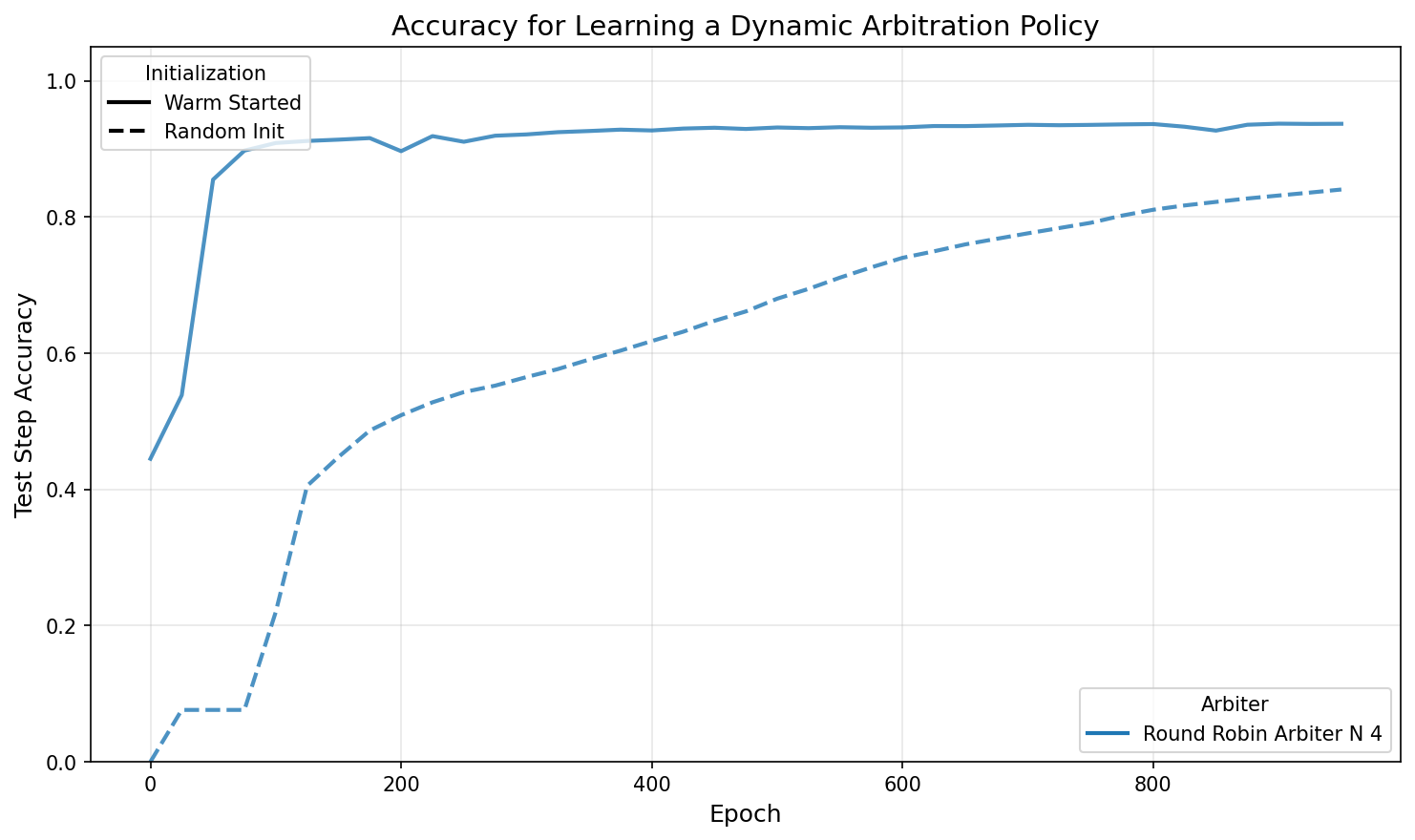}
    \caption{Warm-starting SSM using symbolic structure VS. learning using traditional random initialization on an dynamic round robin arbitration policy with 4 channels.}
    \label{fig:RoundRobinArbitration_Policy}
\end{figure}

\section{Preliminaries}
\subsection{State Space Models}
A classic SSM is represented as 
\begin{equation}
\label{eq:SSM}
\begin{aligned}
x(t+1) &= Ax(t) + B\mu(t) \\
y(t) &= Cx(t) +D\mu(t).
\end{aligned}
\end{equation}
whose components are
\begin{itemize}
    \item $x(t+1)$ is the future state
    \item $A$ is a matrix which describes how the states evolve over time 
    \item $x(t)$ is the current state
    \item $B$ is a matrix which describes how the current input effects the next state
    \item $\mu(t)$ is the current input
    \item $y(t)$ is the current output of the system
    \item $C$ is a matrix which describes how the current state effects the output
    \item $D$ describes how the current input effects the output
\end{itemize}
Often these systems are described more simply as
\begin{equation}
\label{eq:SSM_Moore}
\begin{aligned}
x(t+1) &= Ax(t) + B\mu(t) \\
y(t) &= Cx(t)
\end{aligned}
\end{equation}
where $D=0$~\cite{bourdois2024getonthessmtrain}. For modeling purposes, we use a discretized notion of time. These SSMs are linear in their input and their state. While recent SSM variants such as Mamba and S4~\cite{gu2024mamba} introduce additional complexity, we focus on the canonical form in Equation~\ref{eq:SSM_Moore}, as it more clearly reveals the fundamental properties of the architecture.
\subsection{Moore Machines}
A Moore machine is a system that describes a discrete symbolic state space whose future state is dependent on the current state and input, and output that is dependent on the current state. Specifically a Moore machine is defined as a system \\$A=(S,S_0,\Sigma,\Lambda,T,G)$ where 
\begin{itemize}
    \item $S$ is the set of finite states
    \item $S_0$ is the starting state
    \item $\Sigma$ is the input alphabet
    \item $\Lambda$ is the output alphabet
    \item $T$ is the transition functions mapping the input-state pair to future states\footnote{This is also commonly denoted as $\delta$.}
    \item $G$ is the output mapping which maps output-state pairs 
\end{itemize}
Specifically, a Moore Machine can be expressed as two distinct functions:
\begin{equation}
\label{eq:Moore_Machine}
\begin{aligned}
S_k &= F(S_i, \Sigma_j), \quad \text{where } S_0 \text{ is the initial state}, \\
\Lambda_j &= G(S_i).
\end{aligned}
\end{equation}
\section{Moore-SSMs}
While equations~\eqref{eq:SSM_Moore} and~\eqref{eq:Moore_Machine} are structurally similar, the additive property for generating new states in the SSM in equation~\eqref{eq:SSM_Moore} implies independence between the state and the input that does not appear in the Moore machine. 
\[
A x(t) + B \mu(t)
\]
The Moore machine in equation~\eqref{eq:Moore_Machine} returns the next state as a function of the current input and state jointly.
\[
S_k = F(S_i, \Sigma_j)
\]
Thus, to represent our Moore machine using SSM dynamics in Euclidean space, our mapping will need to allow the input and state to be able to operate independently. 

\begin{lemma}[Moore-SSMs]\label{lemma1:Moore_SSM_Equivalence}
Every Moore machine \( \mathcal{A} = (S, S_0, \Sigma, \Lambda, T, G) \), whose transition and output maps are defined in equation~\eqref{eq:Moore_Machine}, admits an equivalent representation as a discrete-time state--space model of the form
\[
\begin{aligned}
x(t+1) &= A x(t) + B \mu(t), \\
y(t) &= C x(t),
\end{aligned}
\]
where the state vector \(x(t)\) encodes the symbolic state \(S\), $\mu(t)$ encodes the input, and the matrices \(A\), \(B\), and \(C\) are structured so as to preserve the original input--output behavior and state transitions of the Moore machine.
\end{lemma}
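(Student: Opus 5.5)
The plan is to prove the lemma by an explicit one-hot (indicator) construction, then check by induction on $t$ that the SSM trajectory tracks the Moore machine run exactly. Fix enumerations $S=\{s_1,\dots,s_n\}$, $\Sigma=\{a_1,\dots,a_m\}$, $\Lambda=\{\lambda_1,\dots,\lambda_p\}$. I will encode state $s_i$ as $e_i\in\mathbb{R}^n$, input $a_j$ as $e_j\in\mathbb{R}^m$, and output $\lambda_k$ as $e_k\in\mathbb{R}^p$. The output map is then immediate. Let $C\in\mathbb{R}^{p\times n}$ have $i$-th column $e_{k(i)}$, where $G(s_i)=\lambda_{k(i)}$. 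Then $Ce_i$ is exactly the encoding of $G(s_i)$, so $y(t)=Cx(t)$ realizes $G$ whenever $x(t)$ is the correct indicator. The initial condition is $x(0)=e_{i_0}$ with $s_{i_0}=S_0$.

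The transition step is the real obstacle, and I will address it head-on, as the discussion before the lemma already signals. With $x(t)$ and $\mu(t)$ encoded independently, $Ax(t)+B\mu(t)$ is a sum of a state term and an input term. By superposition, the state is then an additive function of the input history. This cannot reproduce a general $F(s,a)$, which needs the \emph{product} $x(t)\otimes\mu(t)$. For example, a machine that must remember whether two consecutive $a_1$'s have occurred cannot be realized this way.

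My resolution is to let the input channel carry the joint state--input information, which is the sense in which the paper says input and state must be made to "operate independently." Concretely, set $\mu(t)=x(t)\otimes e_{j}$ when the current symbol is $a_j$; this is the one-hot vector of the pair $(s_i,a_j)$ in $\mathbb{R}^{nm}$. Take $A=0$, and let $B\in\mathbb{R}^{n\times nm}$ have column $(i,j)$ equal to $e_{i'}$, where $s_{i'}=F(s_i,a_j)$. An equivalent and perhaps more natural presentation is the selective form $A(\mu)=\sum_j \mu_j M_{a_j}$ with $B=0$. Here $M_{a_j}$ is the $0/1$ transition matrix with $(M_{a_j})_{i'i}=1$ iff $F(s_i,a_j)=s_{i'}$; this form is linear in the state for each fixed input. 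I will present the lifted-input version as the primary construction because it matches the displayed form literally. I will remark on the selective version and note that the two coincide, since $B(x\otimes e_j)=M_{a_j}x$.

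Finally I will prove by induction on $t$ the invariant $x(t)=e_{i(t)}$, where $s_{i(t)}$ is the Moore state reached after reading $a_{j(0)}\cdots a_{j(t-1)}$. The base case is the choice of $x(0)$. For the inductive step, the Kronecker structure gives $\mu(t)=e_{(i(t),j(t))}$, so $x(t+1)=B\mu(t)$ is the column indexed by $(i(t),j(t))$, which is $e_{F(s_{i(t)},a_{j(t)})}$. The invariant, together with the definition of $C$, gives $y(t)=\mathrm{enc}(G(s_{i(t)}))$ for all $t$, which is exact input--output equivalence. It also gives a bijection between reachable symbolic states and the vertices $e_i$, which preserves the state-transition structure. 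The step I expect to require the most care in writing is justifying the lifted input encoding as legitimate within the lemma's statement, which leaves the encoding of $\mu(t)$ unspecified. I will flag explicitly that without such a lift, or an input-dependent $A$, the statement is false for general Moore machines.
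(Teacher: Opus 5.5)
Your proposal is correct and follows the paper's construction: one-hot state vectors, the input lifted to the Kronecker state--input pair $\mu(t)=x(t)\otimes e_j$, columns of $B$ read off from the transition function, and columns of $C$ read off from $G$. The only difference is cosmetic: the paper takes $A=I$ with column $(i,j)$ of $B$ equal to $e_{T(i,j)}-e_i$ (the identity-based form that Algorithm~\ref{alg:ssm_init} later perturbs for warm-starting), while you take $A=0$ with column $e_{T(i,j)}$; both give $x(t+1)=e_{T(i,j)}$, and your induction on $t$ with explicit $x(0)=e_{i_0}$ makes rigorous what the paper leaves implicit.
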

\begin{proof}[lemma~\ref{lemma1:Moore_SSM_Equivalence}]
Given the Moore machine \( \mathcal{A} = (S, S_0, \Sigma, \Lambda, T, G) \) we first focus on representing the state updates of this system in the SSM form
    \[
    x(t+1)=Ax(t)+B\mu(t)
    \]
\textbf{Representing the Moore State Update Dynamics Using an SSM}    
\\
By treating each $S_i\in S$ as a basis vector for the orthonormal basis, we can create a trivial mapping of our symbolic states from a discrete space to a Euclidean space $E$ in $\mathbb{R}^N$ where $N=|S|$. Thus, every state vector denoted e$_i$ in the SSM is in E and is represented as 
\[
\mathbf{e}_i^\top = \begin{pmatrix} 0 & 0 & \cdots & 1 & \cdots & 0 \end{pmatrix}
\]
    The matrix $A$ shows how $x(t)=e_i$ evolves over time without any input to the system, yet in Moore machines, the state does not evolve without input; thus, we can set $A$ to the identity so $Ax(t)=Ix(t)=x(t)$.
    Setting $A=I$ yields the resulting equation
    \[
    x(t+1) - x(t)=e_j-e_i = B \mu(t)
    \]
    This formulation expresses the change in state as a function of the input. However, this representation does not eliminate the co-dependence of the state and input that characterizes a Moore machine. Although the difference \(x(t+1) - x(t)\) is linear in \(\mu(t)\), the underlying transition dynamics still depend jointly on the current symbolic state and the current input. Consequently, this formulation alone is insufficient to decouple the Moore machine’s state–input dependence.
    
    To preserve this joint dependence within an SSM framework, we reinterpret the input as a single composite symbol that encodes both the current state and the current input. Specifically, by defining the input space as the Kronecker product \(S \otimes \Sigma\), we construct an augmented input vector \(\mu(t)\) such that each input symbol uniquely represents a state–input pair. Under this encoding, the state transition becomes linear with respect to \(\mu(t)\), while still preserving the original Moore machine dynamics.
\[
\mu(t)^\top = \begin{bmatrix} \mu_{(s_1,\sigma_1)} & \mu_{(s_1,\sigma_2)} & \mu_{(s_1,\sigma_3)} & \mu_{(s_2,\sigma_1)} & \mu_{(s_2,\sigma_2)} & \mu_{(s_2,\sigma_3)} & \cdots & \mu_{(s_n,\sigma_3)} \end{bmatrix}
\]
By choosing \(\mu(t)\) of the above Kronecker-product form, we define a matrix\\ $B \in \mathbb{R}^{|S| \times (|S|\cdot|\Sigma|)}$ 
such that each column corresponds to a unique state--input pair \((s_i,\sigma_j)\). Specifically, the column indexed by \((i,j)\) is given by
\[
B_{:,(i,j)} = e_{T(i,j)} - e_i
\]
where \(x_i \in \mathbb{R}^{|S|}\) denotes the canonical basis vector corresponding to the current state \(s_i\), and \(e_{T(i,j)}\) denotes the canonical basis vector corresponding to the successor state determined by the transition function \(T(i,j)\) from $A$. Under this construction, the state update
\[
\begin{aligned}
    x(t+1) - x(t) = B \mu(t),\\
    x(t+1)  = x(t) + B \mu(t), \\
    x(t+1)  = Ax(t) + B \mu(t)
\end{aligned}
\]
exactly reproduces the Moore machine transition dynamics.
\\
\\
    \textbf{Representing the Moore Output Update Dynamics Using an SSM}
    \\
Encoding the output dynamics of the Moore machine \( \mathcal{A} \) into the SSM output equation \(y(t) = C x(t)\) is straightforward, since the outputs in a Moore machine depend solely on the current state. By representing each state \(s_k \in S\) as a one-hot vector \(x_t = e_k \in \mathbb{R}^{|S|}\), we may construct an output matrix $C \in \{0,1\}^{|\Lambda|* |S|}$
whose columns encode the Moore output function. Specifically, the \(k\)-th column of \(C\) is the one-hot vector corresponding to the output symbol \(\lambda = G(s_k)\). Under this construction, the linear readout
\[
y(t) = C x(t)
\]
exactly reproduces the Moore machine output, with \(y(t)\) given as the one-hot encoded representation of the output alphabet element \(\lambda \in \Lambda\). Thus, we have perfectly encoded $A$ in SSM dynamics. $\qed$
\end{proof}
\section{Learning Regular Languages from Data}
Lemma~\ref{lemma1:Moore_SSM_Equivalence} establishes exact equivalence between Moore machines and a class of SSMs. Yet, we show gradient descent (while training SSMs) fails to recover symbolic structure even on regular language tasks where such structure fully characterizes the target function. Automata learning methods, by contrast, exploit this structure directly. This disparity underscores the potential of symbolic priors to accelerate learning in continuous architectures. 
We evaluate randomly initialized SSMs against active and passive automata learning methods at learning automata structures synthesized from SYNTCOMP~\cite{SYNTCOMP2023}.
While leading automata learning tools such as AALpy use random automata structures for benchmarking~\cite{muvskardin2022aalpy}, a more structured benchmark set allows us more visibility into learning dynamics.
\begin{problem}~\label{problem:AutomataRecovery}(SYNTCOMP Emulation Task)
Specifically, our task is recovering a system $\mathcal{A}^*$ which generates infinite traces $\omega \in \Sigma \cup \Lambda$ where $\Sigma$ and $\Lambda$ are the input and output alphabet
whose behavior is accepted by the original system $\mathcal{A}$.
As we are testing SSMs that do not recover an automaton structure, we do not use $\omega$-testing and rather settle for a simpler task of emulating the language $L$. Our evaluation is the percentage of traces $\omega_i$ generated by $\mathcal{A}^*$ that are accepted by $\mathcal{A}$.
\end{problem}
\subsection{Methodology}
\begin{figure}[t]
    \centering
\begin{tikzpicture}[
    box/.style={rectangle, draw=black, semithick, minimum width=1.1cm, minimum height=0.6cm, align=center, fill=white, font=\fontsize{5}{5.5}\selectfont},
    orangebox/.style={rectangle, draw=orange!80!black, semithick, fill=orange!20, minimum width=1.4cm, minimum height=1cm, align=left, font=\fontsize{5}{5.5}\selectfont},
    lowerbox/.style={rectangle, draw=black, semithick, minimum height=1.3cm, align=center, fill=white},
    state/.style={circle, draw=black, semithick, minimum size=0.18cm, fill=white},
    bluestate/.style={circle, draw=blue!70!black, semithick, minimum size=0.14cm, fill=blue!30},
    graystate/.style={circle, draw=gray, semithick, minimum size=0.14cm, fill=gray!30},
    arrow/.style={-{Stealth[scale=0.5]}, semithick},
    dashedarrow/.style={-{Stealth[scale=0.5]}, semithick, dashed},
    tinyfont/.style={font=\fontsize{5}{5}\selectfont},
    microtext/.style={font=\fontsize{5}{3}\selectfont},
]

\node[orangebox] (tlsf) at (0,0) {
    \textbf{TLSF File}\\[0.5pt]
    inputs: $AP_1, AP_2, \cdots$\\
    outputs: $AP_{o1}, AP_{o2}, \cdots$\\
    Assumptions: $\varphi_{\text{A}}$\\
    Guarantees: $\varphi_{\text{G}}$
};

\node[box, right=1.3cm of tlsf, minimum width=0.7cm, minimum height=0.45cm] (dotfile) {Dot File};

\begin{scope}[shift={($(dotfile.east)+(0.25,0)$)}, scale=0.35]
    \node[state, minimum size=0.1cm] (dA) at (0,-.9cm) {};
    \node[state, minimum size=0.1cm] (dB) at (0.6,0.3) {};
    \node[state, minimum size=0.1cm] (dC) at (1.6,-.9) {};
    \draw[->, thin] (dA) -- (dB);
    \draw[->, thin] (dC) -- (dA);
    \draw[->, thin] (dB) to[bend left=15] (dC);
\end{scope}

\draw[arrow] (tlsf.east) -- (dotfile.west) node[midway, above, microtext, align=center, xshift=30pt,yshift=8pt] {Synthesis via Spot/LTLSynt};

\node[box, right=1.6cm of dotfile, minimum width=1cm, minimum height=0.5cm] (traces) {Generate\\Finite Traces};

\draw[arrow] ($(dotfile.east)+(0.7,0)$) -- (traces.west);

\node[right=0.8cm of traces, align=left, font=\fontsize{9}{5.5}\selectfont] (traceseq) {
    $\pi = \{ap_1, \neg ap_2; ap_1, ap_2; \cdots\}$
};

\draw[arrow] (traces.east) -- (traceseq.west) node[midway, above, microtext, align=center, xshift=23pt,yshift=8pt] {Prefix Closed Dataset};


\node[lowerbox, minimum width=2.5cm, minimum height=1.6cm] (lstar) at (-0.3,-2) {};
\node[tinyfont] at ($(lstar.south)+(0,-0.3)$) {\textbf{Active Learning L*}};

\begin{scope}[shift={($(lstar.center)+(0,0)$)}, scale=1.5]
    \node[rectangle, draw, minimum width=0.4cm, minimum height=0.22cm, fill=gray!10, microtext] (learner) at (-0.5, 0.12) {Learner};
    \node[rectangle, draw, minimum width=0.3cm, minimum height=0.4cm, fill=gray!10, microtext, align=center] (mat) at (0.5, 0) {$\in T$\\$\downarrow$};
    \node[microtext, align=center] at (0, 0.35) {Memb.\\Oracle};
    \node[microtext, align=center] at (-.5, -0.4) {Equiv.\\Oracle};
    \draw[->, very thin] (learner.east) -- ++(0.1,0) |- (mat.west);
    \draw[->, very thin] (mat.south) -- ++(0,-0.1) -| (learner.south);
    \node[microtext] at (.5, -0.4) {$M \in \mathcal{L}$};
\end{scope}

\node[lowerbox, minimum width=2.2cm, minimum height=1.6cm] (mealy) at (3.5,-2) {};
\node[tinyfont] at ($(mealy.south)+(0,-0.3)$) {\textbf{Passive Mealy Machine RPNI}};

\begin{scope}[shift={($(mealy.center)+(0,0)$)}, scale=0.45]
    \node[bluestate, minimum size=0.18cm] (mA) at (-1,-.5) {};
    
    \node[bluestate, minimum size=0.18cm] (mB) at (0.3, 0.6) {};
    \node[bluestate, minimum size=0.18cm] (mC) at (0.3, -0.6) {};
    
    \node[graystate, minimum size=0.18cm] (mD) at (1.5,-.5) {};
    
    \draw[->, thin, orange!80!black] (mA) -- (mB);
    \draw[->, thin, blue!70!black] (mA) -- (mC);
    
    \draw[->, thin, orange!80!black] (mB) -- (mD);
    \draw[->, thin, blue!70!black] (mC) -- (mD);
    
    \node[microtext] at (-1, -0.5) {$q_0$};
    \node[microtext] at (0.3, 0.6) {$q_1$};
    \node[microtext] at (0.3, -0.6) {$q_2$};
    \node[microtext] at (1.5, -0.5) {$q_3$};
\end{scope}
\node[lowerbox, minimum width=2.5cm, minimum height=1.6cm] (ssm) at (7.2,-2) {};
\node[tinyfont] at ($(ssm.south)+(0,-0.3)$) {\textbf{Autoregressive SSM}};

\begin{scope}[shift={($(ssm.center)+(0,0)$)}, scale=1]
    \node[circle, draw, fill=blue!20, minimum size=0.1cm] (iA) at (-0.7, 0.32) {};
    \node[circle, draw, fill=blue!20, minimum size=0.1cm] (iB) at (-0.7, 0) {};
    \node[circle, draw, fill=blue!20, minimum size=0.1cm] (iC) at (-0.7, -0.32) {};
    \node[circle, draw, fill=orange!30, minimum size=0.1cm] (hA) at (0, 0.44) {};
    \node[circle, draw, fill=orange!30, minimum size=0.1cm] (hB) at (0, 0.15) {};
    \node[circle, draw, fill=orange!30, minimum size=0.1cm] (hC) at (0, -0.15) {};
    \node[circle, draw, fill=orange!30, minimum size=0.1cm] (hD) at (0, -0.44) {};
    \node[circle, draw, fill=green!30, minimum size=0.1cm] (oA) at (0.7, 0.22) {};
    \node[circle, draw, fill=green!30, minimum size=0.1cm] (oB) at (0.7, -0.22) {};
    \foreach \i in {A,B,C} {
        \foreach \h in {A,B,C,D} {
            \draw[very thin, gray!50] (i\i) -- (h\h);
        }
    }
    \foreach \h in {A,B,C,D} {
        \foreach \o in {A,B} {
            \draw[very thin, gray!50] (h\h) -- (o\o);
        }
    }
    \node[microtext] at (-0.7, -0.6) {$x(t)$};
    \node[microtext] at (0.7, -0.6) {$y(t)$};
    \draw[->, very thin] (0.85, 0) to[out=22, in=-22] (0.85, 0.48) to[out=158, in=22] (-0.85, 0.48) to[out=-158, in=158] (-0.85, 0);
\end{scope}

\draw[arrow, rounded corners=4pt] (tlsf.south) -- ++(0,-0.45) -| (lstar.north);
\draw[arrow] (traceseq.south) -- ++(0,-0.32) -| (mealy.north);
\draw[arrow] (traceseq.south) -- ++(0,-0.32) -| (ssm.north);

\end{tikzpicture}
    \caption{Highlevel training pipeline visualization for learning regular languages from SYNTCOMP}
    \label{fig:TrainingPipeling}
\end{figure}
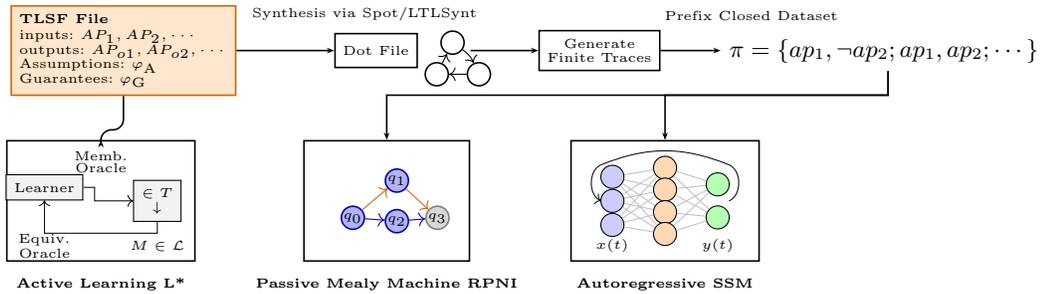
Fig.~\ref{fig:TrainingPipeling} contrasts three approaches for learning a regular language $L$ from a automata $\mathcal{A}$. Firstly we synthesize a system from the TLSF file using the \texttt{ltlsynt} tool~\cite{renkin2022dissecting} via Spot 2.14~\cite{duret2022spot}. This system is the oracle system for our active learning setup. Active learning generates data internally via queries, but gradient-based learning and passive learning require training datasets. We must therefore generate data from the synthesized systems that respects their assumptions.
\vspace{0.5em}
\newline
\noindent\textbf{Data Generation}
Data generation from automata systems is an underexplored task. There are two problems in generating data from these systems: (1) the data generated needs to be finite, but the original systems accept infinite behaviors, and (2) most data generation algorithms do not follow the complex assumptions of the underlying systems. 
We solve problem 1 by simplifying our requirements for generating training data. We leverage Spot semantics to do so. Our data generated needs to follow the rules of the underlying language for the length $T$ of the generated finite trace. We then append \texttt{cycle\{1\}}. This specifies that all behavior onward is true, thus any incorrect behavior of the trace would be in the first $T$ time steps. We validate our traces by generating the data using Spot semantics and then check the validity of this finite trace by using \texttt{autfilt} on an HOA~\cite{babiak2015hanoi} version of this system. (See Appendix~\ref{App:Sample_Traces_and_Logs} for sample trace).

The HOA file format represents a system that is equivalent to its Moore structure, which is deterministic in the union of the input and output pairs.
While tools such as HOAX~\cite{di2025execution} enable the generation of finite and infinite traces that respect the behavior of these systems using the HOA file format, these tools are slow and fail when trying to model complex assumptions. 

\tikzset{
    state/.style={circle, draw, minimum size=1cm, font=\small},
    active/.style={fill=orange!40},
    inactive/.style={fill=orange!15},
    every edge/.style={draw, ->, >=Stealth},
    input label/.style={font=\small, red!70!black},
}
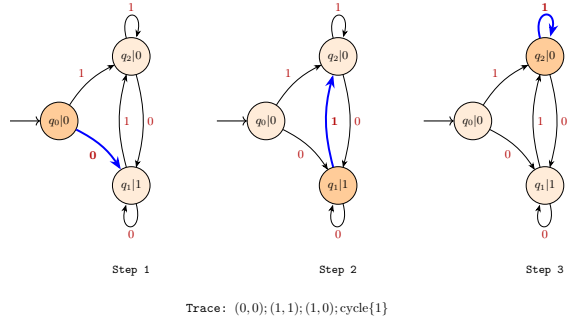
\begin{figure}
    \centering
\scalebox{0.5}{
\begin{tikzpicture}[node distance=1.5cm]

\begin{scope}[local bounding box=step1]
    \node[state, active] (q0) {$q_0 | 0$};
    \node[state, inactive, above right=1cm and 1.2cm of q0] (q2) {$q_2 | 0$};
    \node[state, inactive, below right=1cm and 1.2cm of q0] (q1) {$q_1 | 1$};
    
    \path (q0) edge[bend left=15] node[input label, above left] {1} (q2);
    \path (q0) edge[bend left=15, line width=1.5pt, blue] node[input label, below left] {\textbf{0}} (q1);
    \path (q2) edge[loop above] node[input label, above] {1} ();
    \path (q2) edge[bend left=15] node[input label, right] {0} (q1);
    \path (q1) edge[bend left=15] node[input label, right] {1} (q2);
    \path (q1) edge[loop below] node[input label, below] {0} ();
    
    \draw[->, thick] ([xshift=-0.8cm]q0.west) -- (q0.west);
    
    \node[below=1.5cm of q1, font=\footnotesize\ttfamily] {Step 1};
\end{scope}

\begin{scope}[xshift=5.5cm, local bounding box=step2]
    \node[state, inactive] (q0) {$q_0 | 0$};
    \node[state, inactive, above right=1cm and 1.2cm of q0] (q2) {$q_2 | 0$};
    \node[state, active, below right=1cm and 1.2cm of q0] (q1) {$q_1 | 1$};
    
    \path (q0) edge[bend left=15] node[input label, above left] {1} (q2);
    \path (q0) edge[bend left=15] node[input label, below left] {0} (q1);
    \path (q2) edge[loop above] node[input label, above] {1} ();
    \path (q2) edge[bend left=15] node[input label, right] {0} (q1);
    \path (q1) edge[bend left=15, line width=1.5pt, blue] node[input label, right] {\textbf{1}} (q2);
    \path (q1) edge[loop below] node[input label, below] {0} ();
    
    \draw[->, thick] ([xshift=-0.8cm]q0.west) -- (q0.west);
    
    \node[below=1.5cm of q1, font=\footnotesize\ttfamily] {Step 2};
\end{scope}

\begin{scope}[xshift=11cm, local bounding box=step3]
    \node[state, inactive] (q0) {$q_0 | 0$};
    \node[state, active, above right=1cm and 1.2cm of q0] (q2) {$q_2 | 0$};
    \node[state, inactive, below right=1cm and 1.2cm of q0] (q1) {$q_1 | 1$};
    
    \path (q0) edge[bend left=15] node[input label, above left] {1} (q2);
    \path (q0) edge[bend left=15] node[input label, below left] {0} (q1);
    \path (q2) edge[loop above, line width=1.5pt, blue] node[input label, above] {\textbf{1}} ();
    \path (q2) edge[bend left=15] node[input label, right] {0} (q1);
    \path (q1) edge[bend left=15] node[input label, right] {1} (q2);
    \path (q1) edge[loop below] node[input label, below] {0} ();
    
    \draw[->, thick] ([xshift=-0.8cm]q0.west) -- (q0.west);
    
    \node[below=1.5cm of q1, font=\footnotesize\ttfamily] {Step 3};
\end{scope}

\node[below=0.5cm of step2, font=\normalsize\ttfamily, align=center] (trace) {
    Trace: $(0,0);(1,1);(1,0);\textrm{cycle}\{1\}$
};

\end{tikzpicture}
}
   \caption{Methodology for generating assumption following traces by randomly picking transitions from states, generating assumption abiding traces}
    \label{fig:Data_Gen_Random_Walk}
\end{figure}
We use the GraphViz Dot file representation~\cite{gansner2009drawing} to encode our systems as deterministic with respect to inputs, as illustrated in Fig.~\ref{fig:Data_Gen_Random_Walk}. We generate data by performing random walks over this graph and logging the resulting traces, appending \texttt{cycle\{1\}} to each trace. By exploiting the ground-truth Moore machine structure, we ensure all data satisfies the specification assumptions, while random walks enable highly efficient large-scale data generation.
\vspace{0.5em}
\newline
\noindent\textbf{Active Automata Learning}
We perform active learning using AALpy to recover Mealy machines equivalent to our target Moore machines; AALpy has better support for Mealy machines, so we rely on the well-known equivalence of Moore machines and Mealy machines~\cite{klimovich2010transformation}. We follow a standard active learning setup using the L$^*$ algorithm. While there have been many advances in automata learning (see cf. Sec. ~\ref{sec:related}) L$^*$ is sufficient for illustrating the sample efficiency of active learning in this setting.
Our system under learning (SUL) is the originally synthesized automaton from the TLSF specification.
For membership queries, L$^*$ submits an input sequence to the SUL, which simulates the automaton and returns the corresponding sequence of output symbols—each output being a boolean valuation over the output propositions. This provides the teacher's response without requiring access to the automaton's internal structure. For equivalence queries, we use a random walk algorithm, which samples input sequences to search for counterexamples between the hypothesis and the SUL. 
This setup is simple, and the choice of SUL as the original system does not reflect real-world applications. However, it creates a useful comparison to other learning methods since all data comes from a clean source with no noise.

We run three trials of active learning on each SYNTCOMP benchmark and report the average number of queries as the sample complexity. For each recovered automaton, we generate 1000 traces of length 20 using our data generation pipeline and compute the percentage accepted by the original system. The reported accuracy is the average across the three trials. Trials were run on AWS Lambda; those that timed out are marked as failed and excluded from the final results. (See Appendix~\ref{App:Sample_Traces_and_Logs} for the output structure using automata learning). 
\vspace{0.5em}
\newline
\noindent\textbf{Passive Automata Learning}
The passive learning system uses RPNI for learning Mealy machines equivalent to the underlying Moore machine. We generate traces from our data generation pipeline via random walks across the Dot file representation, and convert those to a prefix-closed representation to learn an automaton from data. Fig.~\ref{fig:Prefix_Closed_Data} shows the converted data generated from dot files into a form acceptable for the passive learning setup. 
\begin{figure}[t]
    \centering
    \begin{lstlisting}
#Original Trace
g_0&!c_0&r_0&!g_1&!c_1&r_1;!g_0&c_0&!r_0&g_1&!c_1&!r_1;cycle{1}
    
#Prefix-closed representation
( !c_0 & r_0 & !c_1 & r_1) ,(g_0 & !g_1 )
( !c_0 & r_0 & !c_1 & r_1 ; c_0  & !r_0 & !c_1 & !r_1),(!g_0 & g_1 )
    \end{lstlisting}
    \caption{Prefix-closed representation of sample trace}
    \label{fig:Prefix_Closed_Data}
\end{figure}

Unlike the active learning setup, passive learning needs data provided beforehand. We evaluate our passive learning setup on a range of sample sizes from 5000-30000 traces of length 20. We measure accuracy identically to the active learning setup: we generate 1000 traces from the learned system and compute the percentage accepted by the original system. We report the sample complexity as the number of samples required to reach maximum accuracy. Systems that achieve 100\% accuracy do not continue to larger sample sizes. 
\subsubsection{SSM Learning}~\label{subsec:SSM_FSM}
We learn these regular languages using state-space models (SSMs) from  Equation~\ref{eq:SSM_Moore}.
Models are trained autoregressively to predict the output at time $T$ given the input sequence from times $0$ through $T$, using the PyTorch library~\cite{imambi2021pytorch}. 
Fig.~\ref{fig:SSM_FSM_Visualization} illustrates the SSM architecture used to learn the SYNTCOMP benchmarks.

\begin{figure}[t]
    \centering
\scalebox{.45}{
\begin{tikzpicture}[
    >=Stealth,
    node distance=1.5cm and 2cm,
    box/.style={draw, rounded corners, minimum width=1.8cm, minimum height=0.9cm, align=center, font=\small},
    param/.style={box, fill=blue!15},
    op/.style={box, fill=orange!20},
    io/.style={box, fill=green!15},
    hidden/.style={circle, draw, fill=gray!20, minimum size=1cm, font=\small},
    arrow/.style={->, thick},
    label/.style={font=\footnotesize\itshape, text=gray}
]

\node[io] (mut) {$\mu(t)$};
\node[label, below=0.1cm of mut] {input};

\node[param, right=1.5cm of mut] (E) {$E$};
\node[label, above=0.1cm of E] {embed};

\node[op, right=1.2cm of E] (relu) {ReLU};

\node[hidden, right=1.2cm of relu] (muembed) {$\tilde{\mu}_t$};
\node[label, below=0.1cm of muembed] {embedded};

\node[param, right=1.2cm of muembed] (B) {$B$};

\node[hidden, above=3cm of B, xshift=-5cm] (xprev) {$x(t)$};
\node[label, left=.5cm of xprev] {prev state};

\node[param, right=1.2cm of xprev] (A) {$A$};

\node[op, right=1.2cm of B] (sum) {$+$};

\node[op, right=1.2cm of sum] (tanh) {tanh};

\node[hidden, right=1.2cm of tanh] (xt) {$x(t+1)$};
\node[label, above=0.1cm of xt] {state};

\node[param, right=1.5cm of xt] (C) {$C$};
\node[label, above=0.1cm of C] {readout};

\node[io, right=1.5cm of C] (yt) {$y(t+1)$};
\node[label, below=0.1cm of yt] {output};

\draw[arrow] (mut) -- (E);
\draw[arrow] (E) -- (relu);
\draw[arrow] (relu) -- (muembed);
\draw[arrow] (muembed) -- (B);
\draw[arrow] (B) -- (sum) node[midway, below, font=\scriptsize] {$B\tilde{\mu}_t$};

\draw[arrow] (xprev) -- (A);
\draw[arrow] (A) -- (A -| sum.north) -- (sum) node[pos=0.7, right, font=\scriptsize] {$Ax(t)$};

\draw[arrow] (sum) -- (tanh);
\draw[arrow] (tanh) -- (xt);
\draw[arrow] (xt) -- (C);
\draw[arrow] (C) -- (yt);

\draw[arrow, dashed, gray] (xt.north) -- ++(0, 0.8) -| ($(xprev.north) + (0, 0.8)$) -- (xprev.north);
\node[label, above=1.3cm of xt] {recurrence};

\node[label, left=0.5cm of xprev, align=right,yshift=.5cm] (x0label) {$x_0$ \\ (learned)};
\draw[->, gray, dashed] (x0label) -- (xprev);

\begin{scope}[on background layer]
    \node[fit=(mut)(E)(relu)(muembed), draw=green!50!black, dashed, rounded corners, inner sep=8pt, label={[font=\footnotesize]above:Input Embedding}] {};
    
    \node[fit=(B)(A)(sum)(tanh)(xprev), draw=blue!50!black, dashed, rounded corners, inner sep=8pt, label={[font=\footnotesize]above:State Transition}] {};
    
    \node[fit=(C)(yt), draw=orange!50!black, dashed, rounded corners, inner sep=8pt, label={[font=\footnotesize]above:Moore Output}] {};
\end{scope}

\end{tikzpicture}
}
    \caption{SSM architecture used to learn SYNTCOMP regular languages}
    \label{fig:SSM_FSM_Visualization}
\end{figure}
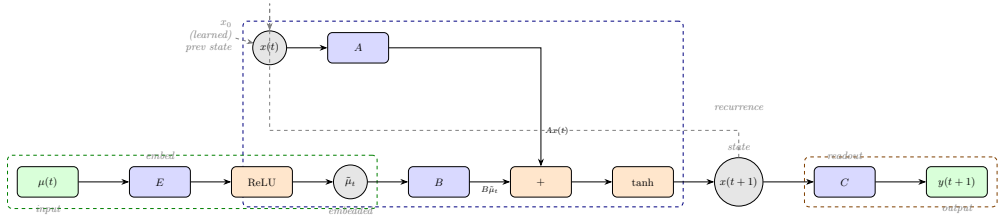

Training is performed using binary cross-entropy loss to predict a boolean output tensor of dimensionality $\Lambda \times 1$, where $\Lambda$ denotes the number of output APs in the original system. 
During training, the model produces a real-valued output vector in $\mathbb{R}^{\Lambda \times 1}$, which is passed through a logit function to obtain a Boolean prediction in $\{0,1\}^{\Lambda \times 1}$. Binary cross-entropy loss is then computed between this prediction and the ground-truth Boolean outputs. We use a standard learning rate $.001$ and the Adam Optimizer~\cite{adam2014method}.

Inputs at each time step are represented as Boolean vectors in $\{0,1\}^{\Sigma \times 1}$, where $\Sigma$ denotes the number of input APs in the system. 
These inputs are embedded into a $32$-dimensional continuous space using a learned linear embedding layer followed by a ReLU activation. 
This embedding procedure is standard practice and allows the SSM to operate over continuous representations while preserving the discrete structure of the input alphabet.
To compute the output at time $T$, the model is provided with the full input sequence as a matrix in $\{0,1\}^{\Sigma \times T}$.
The SSM is unrolled over time, iteratively computing the latent state at each step using the state update equation in Equation~\ref{eq:SSM_Moore}. The final latent state is then mapped through the output function to produce the output vector.

Data are generated using the same procedure as in passive learning. We generate 10000 traces of length 20, of which $10\%$ are held out for testing. Although models are trained autoregressively, evaluation is performed using a stricter notion of trace acceptance rather than per-step prediction accuracy.
Concretely, given an input sequence from time $0$ to $20$, we require that the model correctly predict \emph{every} output along the trace. A test trace is considered accepted if the predicted output sequence exactly matches the ground-truth output sequence produced by the original machine. We report the percentage of test traces accepted under this criterion, which measures how well the SSM emulates the full input--output behavior of the target system.
Unlike passive and active learning methods, gradient descent can iteratively optimize itself. We account for this in our sample complexity calculations by reporting the epoch of highest accuracy $\times$ 9000, which represents the number of times the training data was used to learn the regular language; for consistency we train all tasks on a max 1000 epochs.

Lastly, we emphasize that in this experiment, the matrices $A$, $B$, and $C$ are randomly initialized as is common practice. 
Initializing these matrices according to lemma~\ref{lemma1:Moore_SSM_Equivalence} would yield a system that is already perfectly aligned with the target Moore machine and would therefore trivialize the learning task.
The purpose of this experiment is instead to study the sample efficiency gains obtained by explicitly leveraging symbolic structure in the underlying problem, and to assess whether such structure can be naturally discovered through gradient-based optimization.

\subsection{Sample Efficiency of Symbolic vs. Gradient-Based Learning}~\label{subsec:sample_efficiency}
\begin{figure}[t]
    \centering
    \includegraphics[width=1\linewidth]{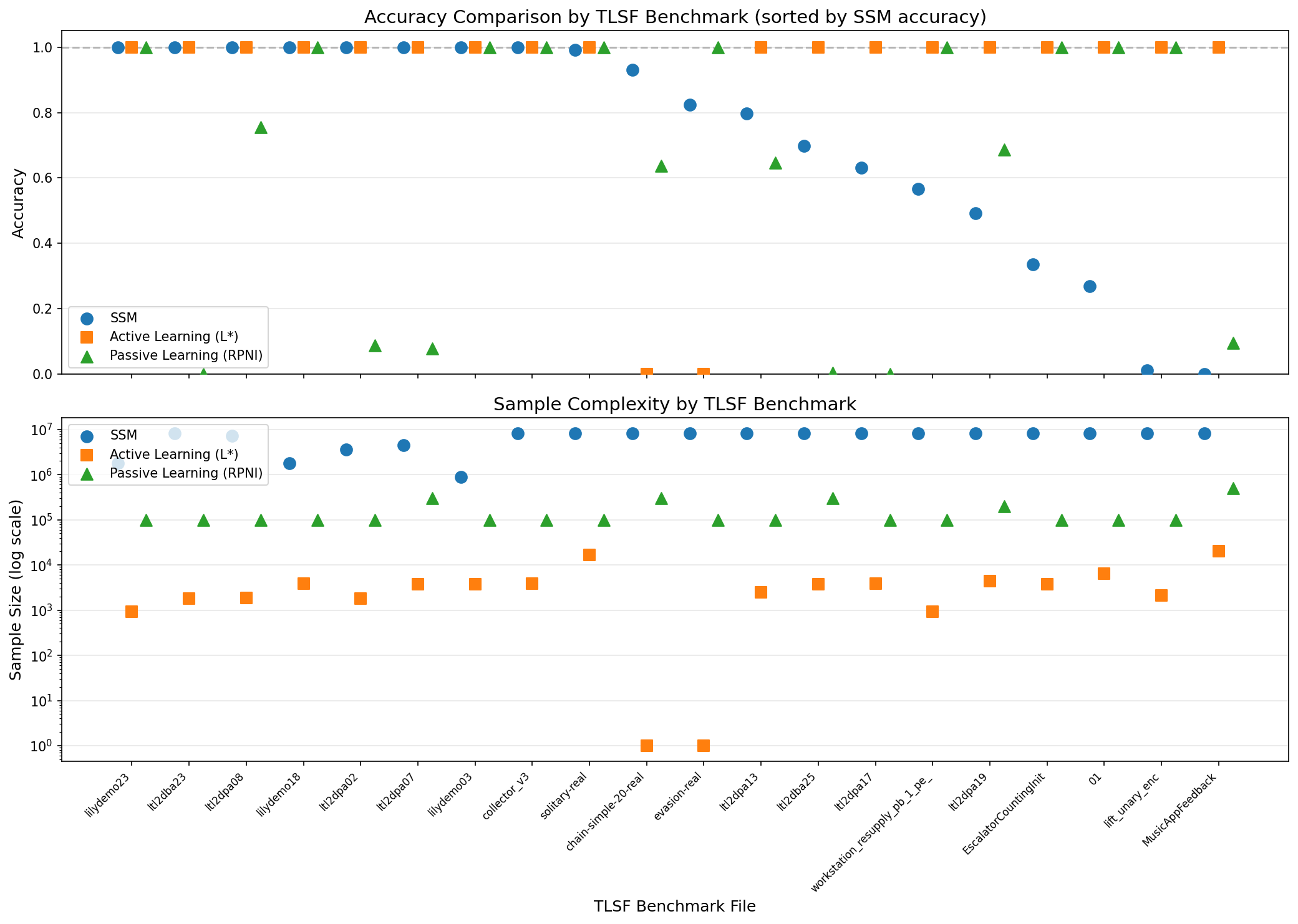}
    \caption{Accuracy and sample complexity (Log Scale) for learning a random subset of 20 SYNTCOMP examples}
    \label{fig:Sample_Complexity_Accuracuy}
\end{figure}
Fig.~\ref{fig:Sample_Complexity_Accuracuy} presents accuracy and sample complexity on a suite of 20 SYNTCOMP benchmarks learned using active, passive, and gradient-based methods. The full set of evaluated benchmarks is provided in a table in Appendix~\ref{app:SyntcompBenchmarks}. The first chart reports accuracy on the trace acceptance task for each learning method on its corresponding benchmark, while the second chart reports sample complexity on a logarithmic scale. Table~\ref{tab:perfect_emulation} shows the total percentages of perfectly emulated systems learned by each of the learning methods. This table shows that overall SSMs perform much worse than passive or active learning methods in learning these regular language problems.
\begin{table}[t]
    \centering
    \begin{tabular}{|l|c|}
        \hline
        Learning Method & Perfect Emulation \\
        \hline
        Gradient-based SSMs & 33.3\% \\
        \hline
        L$^*$ & 77.3\% \\
        \hline
        RPNI & 56.0\% \\
        \hline
    \end{tabular}
    
    \caption{Percentage of SYNTCOMP benchmarks solved with 100\% accuracy by each learning method.}
    \label{tab:perfect_emulation}
\end{table}

\subsection{Symbolic Structure in Gradient-Based Models}
Although Moore machines admit exact realizations as SSMs, gradient-based training of SSMs on tasks that exhibit symbolic structure does not recover representations close to the symbolic structures. 
Instead, these models learn to perfectly emulate the input-output dynamics of the system without inducing the underlying symbolic organization.
\begin{figure}[t]
    \centering
    \includegraphics[width=.7\linewidth]{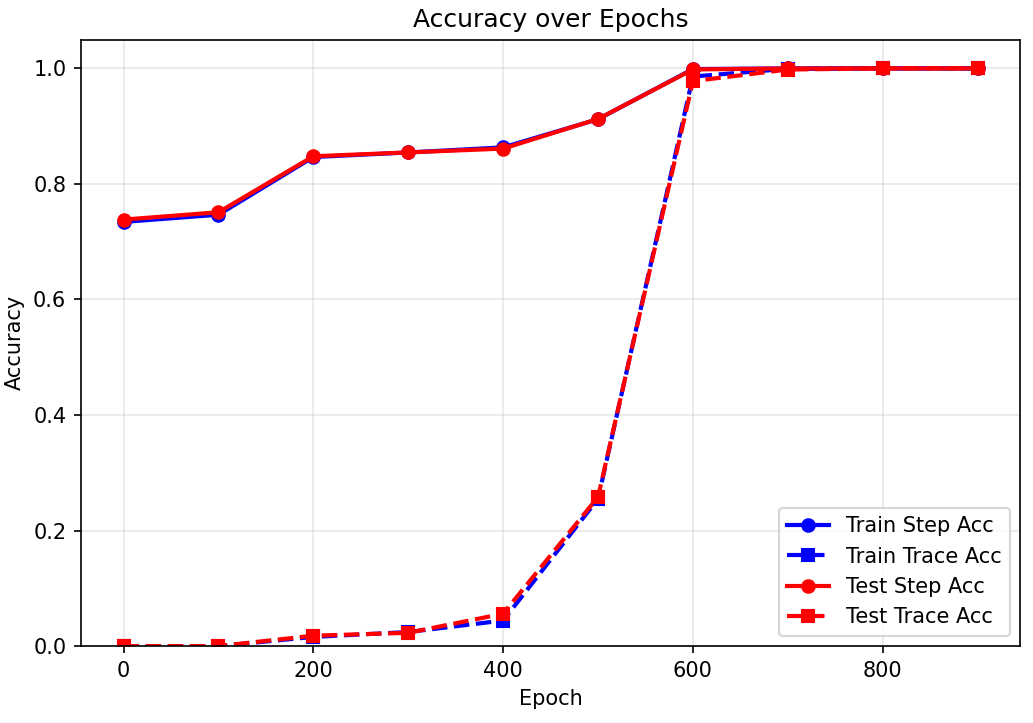}
    \caption{Training and Testing Accuracy on Predicting Next Step and Trace Acceptance for collector/parametric/collector\_v2, N=3}
    \label{fig:collector_v2_n_3_accuracy}
\end{figure}
Fig.~\ref{fig:collector_v2_n_3_accuracy} shows that an SSM trained to learn the collector\_v2 regular language achieves 100\% accuracy on Trace Acceptance after 700 epochs for the test data, demonstrating that this model can perfectly simulate the input-output relationships of the model. 
However, the states that are generated by the SSM as it processes various sequences of inputs do not exhibit any symbolic organization in Euclidean space.
\begin{figure}[t]
    \centering
    \makebox[\textwidth][c]{\includegraphics[width=1\textwidth]{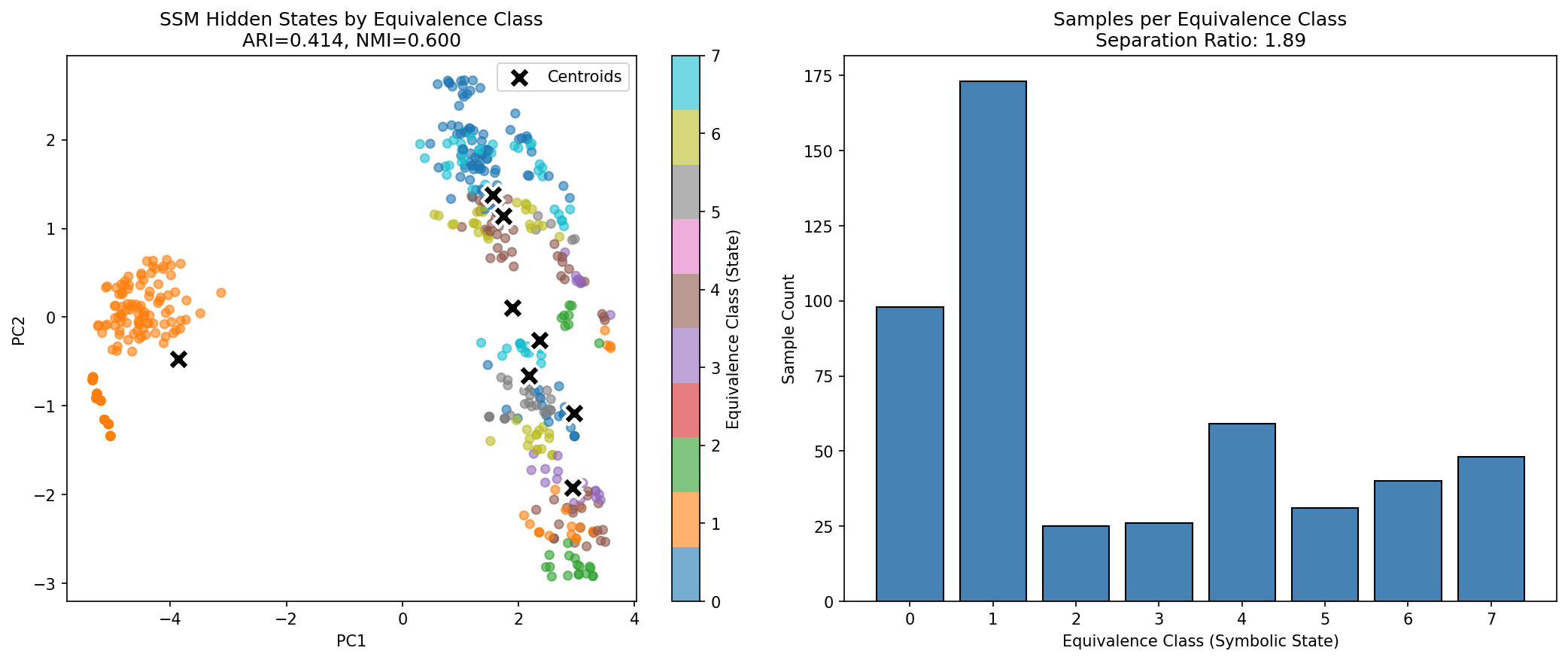}}
    \caption{PCA projection of SSM hidden states after processing 1,000 random length-5 traces, colored by ground-truth automaton state. Counts indicate traces terminating in each state.}
    \label{fig:Clusters}
\end{figure}

Fig.~\ref{fig:Clusters} visualizes the latent states of an SSM trained to emulate the parametric \texttt{collector\_v2}.
We collect the discrete state and corresponding latent state from the SSM for 1000 legal traces from the $\textit{collector\_v2}$. We cluster these states based on their first two principal components and color them based on their discrete state. We then map each of the centroids for the ground truth states. 
The overlap between ground-truth states in this projection shows that the SSM does not preserve the discrete structure of the target automaton.
We quantify cluster quality using three metrics: the separation ratio, Adjusted Rand Index (ARI), and Normalized Mutual Information (NMI). The separation ratio measures the ratio of the variance of elements between different classes and the variance of elements in the same class. Here, a ratio of 1.89 suggests only weak separation. ARI measures agreement between the predicted clustering and ground-truth labels, adjusted for chance—a value of 0 corresponds to random assignment, while 1 indicates perfect agreement. NMI similarly quantifies the mutual information between cluster assignments and true labels, normalized to
[0,1]. The observed values (ARI = 0.414, NMI = 0.600) indicate partial but imperfect correspondence: the SSM captures some structure, as evidenced by the non-zero ARI, but exhibits substantial overlap between states that should be distinct.

 These results suggest that gradient descent on sequence prediction objectives does not recover discrete automaton structure in any robust sense. Nevertheless, our results in Sec.~\ref{subsec:sample_efficiency} demonstrate that symbolic structure is critical for efficient learning of regular languages. Moreover, lemma~\ref{lemma1:Moore_SSM_Equivalence} provides a mechanism to encode this structure directly: by initializing SSMs with the transition dynamics of a Moore machine, we obtain models that inherit symbolic inductive biases while retaining the representational flexibility of continuous state spaces. We leverage these two insights to learn complex systems that lie beyond the reach of classical automata learning—particularly those requiring unbounded memory or robustness to noise.

\section{Warm-Starting SSMs Using Symbolic Structure}
We revisit the motivating example (Sec.~\ref{sec:motiv}) of learning dynamic arbitration policies for cloud resource allocation.
We demonstrate that leveraging symbolic initialization via Lemma~\ref{lemma1:Moore_SSM_Equivalence} enables efficient learning of dynamic arbitration policies for cloud resource allocation.
Specifically, we define our problem as:
\begin{problem}[Dynamic Arbitration]~\label{DynamicArbitration}
Given a finite-state arbiter $A$, we define a system $A^*$ that respects the dynamics of $A$ by default. However, $A^*$ must keep track of the history of the number of grants given out to $g_1,g_2,\cdots,g_n$ as well as $|g|/n$,  where $|g|$ represents the total grants given since timestep 0. While $A^*$ defaults to behaving as $A$, it has an existing safety constraint that no $g_i$ can ever receive more than $(|g|/n)+k$ grants for some constant $k$. We evaluate with $K=3$, which provides an optimally noisy setting.
\end{problem}
Problem~\ref{DynamicArbitration} is not a tractable problem for passive and active automata learning, as this problem requires infinite memory. SSMs can leverage recurrence, similar to RNNs, to solve these problems despite being trained on finite-length traces~\cite{siegelmann1992computational}. We show that symbolic warm-starting is a way of gaining better accuracy and faster learning using SSMs on this problem.
We evaluate our approach on five families of arbiters from the SYNTCOMP benchmarks~\cite{SYNTCOMP2023}:
\begin{itemize}
    \item \texttt{arbiter\_zoo/parametric/arbiter.tlsf}
    \item \texttt{arbiter\_zoo/parametric/arbiter\_with\_cancel.tlsf}
    \item \texttt{full\_arbiter/parametric/full\_arbiter.tlsf}
    \item \texttt{round\_robin\_arbiter/parametric/round\_robin\_arbiter.tlsf}
    \item \texttt{prioritized\_arbiter/parametric/prioritized\_arbiter.tlsf}
\end{itemize}
Each arbiter is evaluated across multiple parameterizations, with
$n \in \{2,\dots,6\}$.
\begin{algorithm}[t]~\label{Algorithim1}
\caption{Symbolic Initialization of an SSM from a Moore Machine}
\label{alg:ssm_init}
\KwIn{Moore machine $\mathcal{A}=(S,S_0,\Sigma,\Lambda,T,G)$}
\KwOut{SSM matrices $A,B,C$}

\textbf{Step 1 (Noise scale).}
Choose $\varepsilon \in (0,1)$ and add Gaussian noise
$\mathcal{N}(0,\varepsilon)$. This noise scale is added to the zero values in the matrix to make learning smoother while preserving a structure close to the symbolic encoding.

\textbf{Step 2 (State matrix).}
Construct $I \in \mathbb{R}^{|S|\times|S|}$
\[
A \leftarrow I + \mathcal{N}(0,\varepsilon),\text{ where $A_{ij}=0$}
\]

\textbf{Step 3 (Input matrix).}
Construct $B \in \mathbb{R}^{|S|\times(|S|\cdot|\Sigma|)}$ as in the proof:
for each $(s_i,\sigma_j)$ with $T(s_i,\sigma_j)=s_{k}$,
set the corresponding column to $e_{k}-e_i$.
Add noise:
\[
B_{ij} \leftarrow B_{ij} + \mathcal{N}(0,\varepsilon),\text{ where $B_{ij}=0$}
\]

\textbf{Step 4 (Output matrix).}
Construct $C \in \mathbb{R}^{|\Lambda|\times|S|}$ as in the proof:
for each state $s_i$ with output $\lambda_\ell=G(s_i)$, set $C_{\ell i}=1$.
Add noise:
\[
C_{ij} \leftarrow C_{ij} + \mathcal{N}(0,\varepsilon),\text{ where $C_{ij}=0$}
\]

\Return $(A,B,C)$
\end{algorithm}
\begin{figure}[t]
    \centering
    \includegraphics[width=0.8\linewidth]{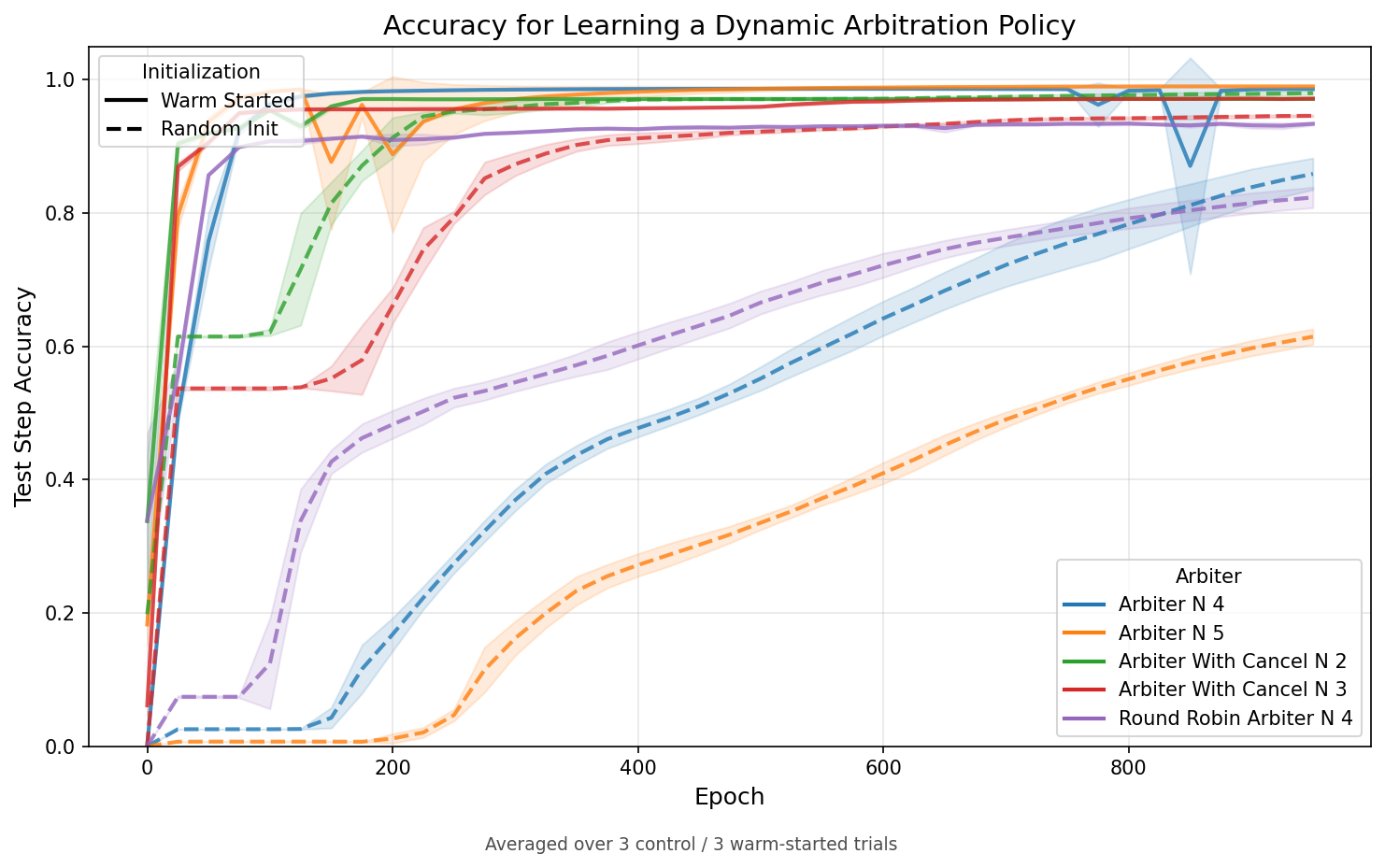}
    \caption{Learning and performance of SSMs learning dynamic arbitration policies over various arbiter settings. Warm-start VS. random initialization. N denotes parameterization}
    \label{fig:AdaptiveArbitrationResults}
\end{figure}
\subsection{Methodology}
Our overall neural architecture follows that shown in Fig.~\ref{fig:SSM_FSM_Visualization}. However, the SSM layer is initialized using Algorithm~\ref{alg:ssm_init}, and we omit a linear input embedding layer to preserve the symbolic structure of the inputs.
Unlike the SSM architecture described in Sec.~\ref{subsec:SSM_FSM}, the input and outputs are represented as one-hot encoded vectors over 2$^\Sigma\times1$ and 2$^\Lambda\times1$, respectively. Furthermore, the  input passed into the SSM is represented as the Kronecker product of the state and one-hot encoded input 
\[
\begin{aligned}
    \mu_i=S_i\otimes \alpha_i, \text{ with $S_0$ being the initial state}
\end{aligned}
\]
breaking the codependence of state and input in equation~\ref{eq:Moore_Machine} and allowing us to fully invoke lemma~\ref{lemma1:Moore_SSM_Equivalence} and construct our Moore-SSMs. 
We generate 10000 traces using our original data-generation procedure.
Next, we iterate over these traces and convert them to conform to the rules of the system defined in Problem~\ref{DynamicArbitration}.
We note that for every system, the newly generated data is not accepted by the original automaton, further demonstrating that our new system is fundamentally different than the initialized system.
Furthermore, initial accuracy for this task on warm-started systems is 
10\%-50\%. If the new problem were fundamentally the same as the old problem, our initializations would not require any learning to perform this task.
We train on 9000 traces and evaluate on the auto-regressive task of predicting the next output $\lambda$ for our test set.

\subsection{Warm-Starting Evaluation}
Symbolic warm-starting enables massive efficiency gains in learning these tasks over the various benchmarks. 
We define successful convergence as achieving at least 90\% test accuracy.
This threshold is intentionally stringent while remaining attainable for all models.
Across all tasks, Moore-SSMs reach this benchmark on average 243 epochs earlier than randomly initialized SSMs.
To quantify statistical significance, we apply a Mann-Whitney $U$ test, which is appropriate for non-parametric convergence time distributions.
The resulting $p$-value of $0.0122$ indicates a statistically significant difference in convergence speed of these respective systems. 

Fig.~\ref{fig:AdaptiveArbitrationResults} shows the results of learning this task using 5 different arbitration schemes over 3 trial runs. 
(See Appendix~\ref{app:adaptive_arbitration_results} for full results). 
In all of these cases, warm-starting systems can learn more efficiently and gain higher or equivalent accuracy. Warm-starting is specifically well-suited for learning systems with larger alphabets.
For example, the system derived from  \texttt{arbiter\_zoo/parametric\\/arbiter.tlsf} with 5 requests and 5 grants achieves close to perfect accuracy after 300 epochs with warm-starting. Using random initialization, the model achieves only 60\% accuracy after 950 epochs on the same problem. 

While symbolic warm-starting is effective on these tasks, it introduces a significant increase in model dimensionality, which can lead to memory constraints.
All experiments were conducted on an NVIDIA RTX~3080 GPU with 10GB of VRAM.
For larger problem instances, specifically the round-robin arbiter with parameterizations $n=5$ and $n=6$, symbolic warm-starting occasionally failed due to GPU memory exhaustion.
While leveraging larger GPUs might rectify this problem, scaling to a larger system for warm-starting might yield unforeseen memory issues. We postulate that using spectral-learning on these Moore-SSMs might yield close systems in smaller dimensionality, but leave these questions to future work.
\section{Related Work}~\label{sec:related}
\noindent \textbf{Warm-starting} Warm-starting is a widely used technique in machine learning for improving learning efficiency by initializing a model with a representation learned from a closely related task. This idea has been particularly successful in reinforcement learning, where bootstrapping an initial system can significantly accelerate policy optimization~\cite{tio2025machine}. 
Warm-starting has become central to LLM training, where learning largely proceeds through self-supervised fine-tuning~\cite{rani2023self}. However, prior work has not explored how symbolic structures can be used to initialize learning in complex settings.
Our work functions as the first to explore the effect of leveraging this automata structure explicitly and demonstrates the effectiveness that even simple Moore machines can have when learning more complex settings.
\vspace{0.5em}
\newline
\noindent\textbf{Automata-Neural Equivalence}
The relationship between neural networks and automata has been extensively studied through the formal connection between weighted finite automata (WFA) and recurrent neural networks (RNNs).
Prior work in this area has largely focused on extracting WFA representations from trained RNNs in order to uncover their underlying symbolic structure and to provide theoretical guarantees on model behavior and training~\cite{li2024connecting,wei2022extracting}.
Building on this direction, classical automata learning algorithms such as L$^*$ have been extended to the weighted setting, enabling the recovery of WFA representations directly from trained RNNs~\cite{2020weighted}.

Despite these advances, existing work has largely focused on relatively simple neural architectures and their correspondence with weighted automata, leaving open the question of how automata structures can be efficiently represented within more expressive neural models. To the best of our knowledge, no prior work has formally established a connection between Moore machines and SSMs.
\vspace{0.5em}
\newline
\noindent\textbf{Bisimulation}
The concept of behavioral closeness between automata has been formalized through bisimulation~\cite{sangiorgi2009origins}, which partitions states into equivalence classes that are indistinguishable under all possible future behaviors.
This strict notion of equivalence enables principled abstraction and has been extended to learning settings involving large and potentially infinite-state systems~\cite{abate2024bisimulation}. The robustness of bisimulation-based abstractions has also led to successful applications in reinforcement learning~\cite{shi2024self}. 
However, because bisimulation defines closeness through equivalence rather than degree, it remains brittle in machine learning contexts where approximate similarity is more natural. Our work addresses this limitation by embedding automata into Euclidean space as state space models, providing a continuous notion of closeness that is better suited to gradient-based learning.

\section{Conclusion}
This work establishes a formal equivalence between Moore machines and a class of state space models, enabling a continuous notion of behavioral closeness between discrete computational structures. By embedding automata into Euclidean space, we provide a framework where symbolic structure serves as an inductive bias for sequence learning. Our experiments demonstrate that symbolic warm-starting yields orders of magnitude greater sample efficiency than gradient-based training alone, and extends learning to systems requiring infinite memory. Future work should explore scaling these methods to understand how formal automata can be more deeply integrated with neural architectures.
\bibliographystyle{splncs04} 
\bibliography{refs}

@inproceedings{gu2024mamba,
  title={Mamba: Linear-time sequence modeling with selective state spaces},
  author={Gu, Albert and Dao, Tri},
  booktitle={First conference on language modeling},
  year={2024}
}

@dataset{SYNTCOMP2023,
  author       = {Jacobs, Swen and Perez, Guillermo A. and Schlehuber-Caissier, Philipp},
  title        = {Data, scripts, and results from SYNTCOMP 2023},
  year         = {2023},
  publisher    = {Zenodo},
  doi          = {10.5281/zenodo.8161423}
}

@article{hamilton1994state,
  title={State-space models},
  author={Hamilton, James D},
  journal={Handbook of econometrics},
  volume={4},
  pages={3039--3080},
  year={1994},
  publisher={Elsevier}
}

@article{angluin1987learning,
  title={Learning regular sets from queries and counterexamples},
  author={Angluin, Dana},
  journal={Information and computation},
  volume={75},
  number={2},
  pages={87--106},
  year={1987},
  publisher={Elsevier}
}

@article{tirnuaucua2012survey,
  title={A survey of state merging strategies for dfa identification in the limit},
  author={T{\^\i}rn{\u{a}}uc{\u{a}}, Cristina},
  journal={Triangle: llenguatge, literatura, computaci{\'o}},
  pages={121--136},
  year={2012}
}

@article{muvskardin2022aalpy,
  title={AALpy: an active automata learning library},
  author={Mu{\v{s}}kardin, Edi and Aichernig, Bernhard K and Pill, Ingo and Pferscher, Andrea and Tappler, Martin},
  journal={Innovations in Systems and Software Engineering},
  volume={18},
  number={3},
  pages={417--426},
  year={2022},
  publisher={Springer}
}

@article{li2024connecting,
  title={Connecting weighted automata, tensor networks and recurrent neural networks through spectral learning},
  author={Li, Tianyu and Precup, Doina and Rabusseau, Guillaume},
  journal={Machine Learning},
  volume={113},
  number={5},
  pages={2619--2653},
  year={2024},
  publisher={Springer}
}

@article{2020weighted,
  title={Weighted Automata Extraction from Recurrent Neural Networks via Regression on State Spaces},
  author={Okudono, Takamasa and Waga, Masaki and Sekiyama, Taro and Hasuo, Ichiro},
  journal={Proceedings of AAAI'20},
  year={2020}
}

@inproceedings{wei2022extracting,
  title={Extracting weighted finite automata from recurrent neural networks for natural languages},
  author={Wei, Zeming and Zhang, Xiyue and Sun, Meng},
  booktitle={International Conference on Formal Engineering Methods},
  pages={370--385},
  year={2022},
  organization={Springer}
}

@article{tio2025machine,
  title={From Machine to Human Learning: Towards Warm-Starting Teacher Algorithms with Reinforcement Learning Agents},
  journal={open review},
  author={Tio, Sidney and Li, Wenjun and Karunasena, Ramesha and Jimmy, Ho Tian Sheng and Varakantham, Pradeep},
  year={2025}
}

@misc{bourdois2024getonthessmtrain,
  author       = {Bourdois, Loïck},
  title        = {Introduction to State Space Models (SSM)},
  howpublished = {Hugging Face Blog},
  year         = {2024},
  month        = {July},
  day          = {19},
  url          = {https://huggingface.co/blog/lbourdois/get-on-the-ssm-train}
}

@inproceedings{di2025execution,
  title={Execution and Monitoring of HOA Automata with HOAX},
  author={Di Stefano, Luca},
  booktitle={International Conference on Runtime Verification},
  pages={44--53},
  year={2025},
  organization={Springer}
}

@inproceedings{duret2022spot,
  title={From spot 2.0 to spot 2.10: What’s new?},
  author={Duret-Lutz, Alexandre and Renault, Etienne and Colange, Maximilien and Renkin, Florian and Gbaguidi Aisse, Alexandre and Schlehuber-Caissier, Philipp and Medioni, Thomas and Martin, Antoine and Dubois, J{\'e}r{\^o}me and Gillard, Cl{\'e}ment and others},
  booktitle={International Conference on Computer Aided Verification},
  pages={174--187},
  year={2022},
  organization={Springer}
}

@article{renkin2022dissecting,
  title={Dissecting ltlsynt},
  author={Renkin, Florian and Schlehuber-Caissier, Philipp and Duret-Lutz, Alexandre and Pommellet, Adrien},
  journal={Formal Methods in System Design},
  volume={61},
  number={2},
  pages={248--289},
  year={2022},
  publisher={Springer}
}

@inproceedings{babiak2015hanoi,
  title={The Hanoi omega-automata format},
  author={Babiak, Tom{\'a}{\v{s}} and Blahoudek, Franti{\v{s}}ek and Duret-Lutz, Alexandre and Klein, Joachim and K{\v{r}}et{\'\i}nsk{\`y}, Jan and M{\"u}ller, David and Parker, David and Strej{\v{c}}ek, Jan},
  booktitle={International Conference on Computer Aided Verification},
  pages={479--486},
  year={2015},
  organization={Springer}
}

@article{gansner2009drawing,
  title={Drawing graphs with Graphviz},
  author={Gansner, Emden R},
  journal={Technical report, AT\&T Bell Laboratories, Murray, Tech. Rep, Tech. Rep.},
  year={2009}
}

@incollection{imambi2021pytorch,
  title={PyTorch},
  author={Imambi, Sagar and Prakash, Kolla Bhanu and Kanagachidambaresan, GR},
  booktitle={Programming with TensorFlow: solution for edge computing applications},
  pages={87--104},
  year={2021},
  publisher={Springer}
}

@article{aichernig2022active,
  title={Active vs. passive: A comparison of automata learning paradigms for network protocols},
  author={Aichernig, Bernhard K and Mu{\v{s}}kardin, Edi and Pferscher, Andrea},
  journal={arXiv preprint arXiv:2209.14031},
  year={2022}
}

@article{aichernig2024benchmarking,
  title={Benchmarking combinations of learning and testing algorithms for automata learning},
  author={Aichernig, Bernhard K and Tappler, Martin and Wallner, Felix},
  journal={Formal Aspects of Computing},
  volume={36},
  number={1},
  pages={1--37},
  year={2024},
  publisher={ACM New York, NY}
}

@inproceedings{boubaker2016formal,
  title={Formal verification of cloud resource allocation in business processes using event-b},
  author={Boubaker, Souha and Mammar, Amel and Graiet, Mohamed and Gaaloul, Walid},
  booktitle={2016 IEEE 30th International Conference on Advanced Information Networking and Applications (AINA)},
  pages={746--753},
  year={2016},
  organization={IEEE}
}

@article{fan2016formal,
  title={A formal aspect-oriented method for modeling and analyzing adaptive resource scheduling in cloud computing},
  author={Fan, Guisheng and Yu, Huiqun and Chen, Liqiong},
  journal={IEEE Transactions on Network and Service Management},
  volume={13},
  number={2},
  pages={281--294},
  year={2016},
  publisher={IEEE}
}

@inproceedings{garfatta2018formal,
  title={Formal modelling and verification of cloud resource allocation in business processes},
  author={Garfatta, Ikram and Klai, Kais and Graiet, Mohamed and Gaaloul, Walid},
  booktitle={OTM Confederated International Conferences" On the Move to Meaningful Internet Systems"},
  pages={552--567},
  year={2018},
  organization={Springer}
}

@article{almarhabi2024distributed,
  title={Distributed arbiter a lightweight, and enhanced access control mechanism for cloud computing},
  author={Almarhabi, Ali Khalid},
  journal={Thermal Science},
  volume={28},
  number={6 Part B},
  pages={4969--4977},
  year={2024}
}

@inproceedings{sharara2021recurrent,
  title={A recurrent neural network based approach for coordinating radio and computing resources allocation in cloud-ran},
  author={Sharara, Mahdi and Hoteit, Sahar and V{\`e}que, V{\'e}ronique},
  booktitle={2021 IEEE 22nd International Conference on High Performance Switching and Routing (HPSR)},
  pages={1--7},
  year={2021},
  organization={IEEE}
}

@article{zi2024time,
  title={Time-Series Load Prediction for Cloud Resource Allocation Using Recurrent Neural Networks},
  author={Zi, Yun},
  journal={Journal of Computer Technology and Software},
  volume={3},
  number={7},
  year={2024}
}

@article{lei2022state,
  title={State space model and queuing network based Cloud Resource Provisioning for Meshed Web Systems},
  author={Lei, Yamin and Cai, Zhicheng and Li, Xiaoping and Buyya, Rajkumar},
  journal={IEEE Transactions on Parallel and Distributed Systems},
  volume={33},
  number={12},
  pages={3787--3799},
  year={2022},
  publisher={IEEE}
}

@article{klimovich2010transformation,
  title={Transformation of a mealy finite-state machine into a moore finite-state machine by splitting internal states},
  author={Klimovich, AS and Solov’ev, VV},
  journal={Journal of Computer and Systems Sciences International},
  volume={49},
  number={6},
  pages={900--908},
  year={2010},
  publisher={Springer}
}

@article{adam2014method,
  title={A method for stochastic optimization},
  author={Adam, Kingma DP Ba J and others},
  journal={arXiv preprint arXiv:1412.6980},
  volume={1412},
  number={6},
  year={2014}
}

@inproceedings{siegelmann1992computational,
  title={On the computational power of neural nets},
  author={Siegelmann, Hava T and Sontag, Eduardo D},
  booktitle={Proceedings of the fifth annual workshop on Computational learning theory},
  pages={440--449},
  year={1992}
}

@article{peng2021dl2,
  title={DL2: A deep learning-driven scheduler for deep learning clusters},
  author={Peng, Yanghua and Bao, Yixin and Chen, Yangrui and Wu, Chuan and Meng, Chen and Lin, Wei},
  journal={IEEE Transactions on Parallel and Distributed Systems},
  volume={32},
  number={8},
  pages={1947--1960},
  year={2021},
  publisher={IEEE}
}

@article{rani2023self,
  title={Self-supervised learning: A succinct review},
  author={Rani, Veenu and Nabi, Syed Tufael and Kumar, Munish and Mittal, Ajay and Kumar, Krishan},
  journal={Archives of Computational Methods in Engineering},
  volume={30},
  number={4},
  pages={2761--2775},
  year={2023},
  publisher={Springer}
}

@article{sangiorgi2009origins,
  title={On the origins of bisimulation and coinduction},
  author={Sangiorgi, Davide},
  journal={ACM Transactions on Programming Languages and Systems (TOPLAS)},
  volume={31},
  number={4},
  pages={1--41},
  year={2009},
  publisher={ACM New York, NY, USA}
}

@inproceedings{abate2024bisimulation,
  title={Bisimulation learning},
  author={Abate, Alessandro and Giacobbe, Mirco and Schnitzer, Yannik},
  booktitle={International Conference on Computer Aided Verification},
  pages={161--183},
  year={2024},
  organization={Springer}
}

@inproceedings{shi2024self,
  title={Self-Supervised Bisimulation Action Chunk Representation for Efficient RL},
  author={Shi, Lei and Jianye, HAO and Tang, Hongyao and Dong, Zibin and Zheng, Yan},
  booktitle={Neurips Safe Generative AI Workshop 2024},
  year={2024}
}

@inproceedings{fremont2020formal,
  title={Formal scenario-based testing of autonomous vehicles: From simulation to the real world},
  author={Fremont, Daniel J and Kim, Edward and Pant, Yash Vardhan and Seshia, Sanjit A and Acharya, Atul and Bruso, Xantha and Wells, Paul and Lemke, Steve and Lu, Qiang and Mehta, Shalin},
  booktitle={2020 IEEE 23rd International Conference on Intelligent Transportation Systems (ITSC)},
  pages={1--8},
  year={2020},
  organization={IEEE}
}
\appendix
\section{Figures}
\subsection{Sample Traces and Logs}~\label{App:Sample_Traces_and_Logs}
\begin{figure}
    \centering
    \begin{lstlisting}
    g_0&!c_0&r_0&!g_1&!c_1&r_1;!g_0&c_0&!r_0&g_1&!c_1&!r_1;cycle{1}
    \end{lstlisting}
    \caption{Sample Trace From Parametric Arbiter With Cancellations Using Spot Semantics and \texttt{cycle\{1\}}}
    \label{fig:Sample_Data}
\end{figure}
\begin{figure}
    \centering
\begin{lstlisting}
    {"tlsf_file": "benchmarks/tlsf_2/sweap/arbiter-with-failure-real.tlsf",
    "trial": 2,"sample_size": 154482, "accuracy": 100.0, "status": "success"}
\end{lstlisting}
\caption{Sample output from one trial of active automata learning}
    \label{fig:active_automata_learning_trial_1}
\end{figure}

\begin{figure}[h]
    \centering
    \begin{lstlisting}
    {"tlsf_file": "benchmarks/tlsf/tsl_paper/KitchenTimerV3.tlsf", "trial": 1,
    "trace_length": 20, "results": 
    [{"num_traces": 5000, "accuracy": 11.2}, 
    {"num_traces": 10000, "accuracy": 26.0},
    {"num_traces": 15000, "accuracy": 28.6},
    {"num_traces": 20000, "accuracy": 13.9},
    {"num_traces": 25000, "accuracy": 35.6},
    {"num_traces": 30000, "accuracy": 93.5}], 
    "final_accuracy": 93.5, "final_traces": 30000, "status": "success"}
    \end{lstlisting}
    \caption{Output of Passive Learning Trial Run on tsl\_paper/KitchenTimerV3 From SYNTCOMP}
    \label{fig:PassiveLearningOutput}
\end{figure}

\begin{figure}[h]
    \centering
    \begin{lstlisting}
    {"file": "/tlsf/parametric/abcg_arbiter.tlsf",
    "training_samples": 10000,
    "converged_epoch": 400,
    "test_trace_acc": 1.0,
    "epoch_history": [
    {"epoch": 0, "loss": 0.6949,  
    "test_step_acc": 0.1643, "test_trace_acc": 0.0},
    {"epoch": 100, "loss": 0.4375,
    "test_step_acc": 0.3726, "test_trace_acc": 0.0},
    {"epoch": 200, "loss": 0.4125,
    "test_step_acc": 0.3726, "test_trace_acc": 0.0},
    {"epoch": 300, "loss": 0.1361,
    "test_step_acc": 0.985, "test_trace_acc": 0.733},
    {"epoch": 400, "loss": 0.0411,
    "test_step_acc": 1.0, "test_trace_acc": 1.0 }]}
    \end{lstlisting}
    
    \caption{SSM Trained on parametric/abcg\_arbiter Training Run}
    \label{fig:trainingrun_abcg}
\end{figure}

\subsection{SYNTCOMP Benchmark Results}~\label{app:SyntcompBenchmarks}
\tiny
\begin{longtable}{p{5cm}|rrr|rrr}
\caption{Comparison of learning methods on SYNTCOMP benchmarks.} \label{tab:benchmark_comparison} \\
\toprule
Benchmark & SSM Acc (\%) & L* Acc (\%) & RPNI Acc (\%) & SSM Samples & L* Samples & RPNI Samples \\
\midrule
\endfirsthead

\multicolumn{7}{c}{\tablename\ \thetable{} -- continued from previous page} \\
\toprule
Benchmark & SSM Acc (\%) & L* Acc (\%) & RPNI Acc (\%) & SSM Samples & L* Samples & RPNI Samples \\
\midrule
\endhead

\midrule
\multicolumn{7}{r}{Continued on next page} \\
\endfoot

\bottomrule
\endlastfoot

$\mathtt{01}$ & 26.9 & 100.0 & 100.0 & 8,100,000 & 6,346 & 5,000 \\
$\mathtt{02}$ & 7.6 & 100.0 & 100.0 & 8,100,000 & 6,440 & 5,000 \\
$\mathtt{03}$ & 2.3 & 99.0 & 0.0 & 8,100,000 & 79,764 & 5,000 \\
$\mathtt{EscalatorSmart}$ & 0.0 & 100.0 & 100.0 & 8,100,000 & 6,959 & 5,000 \\
$\mathtt{Gamemodule}$ & 0.2 & 100.0 & 100.0 & 8,100,000 & 4,721 & 5,000 \\
$\mathtt{KitchenTimerV1}$ & 100.0 & 100.0 & 100.0 & 4,500,000 & 1,878 & 5,000 \\
$\mathtt{KitchenTimerV2}$ & 39.4 & 100.0 & 100.0 & 8,100,000 & 7,843 & 5,000 \\
$\mathtt{KitchenTimerV3}$ & 1.1 & 100.0 & 93.5 & 8,100,000 & 11,775 & 30,000 \\
$\mathtt{KitchenTimerV5}$ & 0.0 & 100.0 & 12.4 & 8,100,000 & 23,099 & 10,000 \\
$\mathtt{MusicAppFeedback}$ & 0.0 & 100.0 & 9.6 & 8,100,000 & 20,296 & 25,000 \\
$\mathtt{MusicAppSimple}$ & 3.0 & 100.0 & 100.0 & 8,100,000 & 5,048 & 5,000 \\
$\mathtt{SPIReadSdi}$ & 0.0 & 100.0 & 100.0 & 8,100,000 & 4,759 & 5,000 \\
$\mathtt{SPIWriteManag}$ & 10.1 & 0.0 & 100.0 & 8,100,000 & 6,533 & 5,000 \\
$\mathtt{SPIWriteSdi}$ & 0.3 & 0.0 & 100.0 & 8,100,000 & 8,624 & 5,000 \\
$\mathtt{SliderDefault}$ & 51.2 & 100.0 & 0.0 & 8,100,000 & 1,974 & 5,000 \\
$\mathtt{SliderScored}$ & 0.0 & 100.0 & 100.0 & 8,100,000 & 6,267 & 5,000 \\
$\mathtt{TorcsGearing}$ & 100.0 & 100.0 & 100.0 & 1,800,000 & 912 & 5,000 \\
$\mathtt{abcg\_arbiter}$ & 100.0 & 100.0 & 100.0 & 3,600,000 & 921 & 5,000 \\
$\mathtt{arbiter}$ & 100.0 & 100.0 & 100.0 & 1,800,000 & 888 & 5,000 \\
$\mathtt{box-limited-real}$ & 0.6 & 100.0 & 96.6 & 8,100,000 & 6,563 & 20,000 \\
$\mathtt{box-real}$ & 30.3 & 100.0 & 100.0 & 8,100,000 & 291,353 & 5,000 \\
$\mathtt{chomp}$ & 1.5 & 100.0 & 100.0 & 8,100,000 & 6,044 & 5,000 \\
$\mathtt{collector\_v1}$ & 81.8 & 100.0 & 0.3 & 8,100,000 & 4,484 & 25,000 \\
$\mathtt{collector\_v2}$ & 100.0 & 100.0 & 6.5 & 8,100,000 & 4,345 & 30,000 \\
$\mathtt{collector\_v3}$ & 100.0 & 100.0 & 100.0 & 8,100,000 & 3,877 & 5,000 \\
$\mathtt{diagonal-real}$ & 7.4 & 100.0 & 98.2 & 8,100,000 & 53,711 & 10,000 \\
$\mathtt{evasion-real}$ & 82.3 & 0.0 & 100.0 & 8,100,000 & — & 5,000 \\
$\mathtt{example10a}$ & 100.0 & 100.0 & 100.0 & 4,500,000 & 1,046 & 5,000 \\
$\mathtt{example10}$ & 100.0 & 100.0 & 6.7 & 4,500,000 & 959 & 15,000 \\
$\mathtt{example7a}$ & 100.0 & 100.0 & 100.0 & 4,500,000 & 1,883 & 5,000 \\
$\mathtt{example7}$ & 100.0 & 100.0 & 100.0 & 900,000 & 1,857 & 5,000 \\
$\mathtt{example8}$ & 100.0 & 100.0 & 100.0 & 1,800,000 & 1,870 & 5,000 \\
$\mathtt{example9}$ & 100.0 & 100.0 & 100.0 & 2,700,000 & 925 & 5,000 \\
$\mathtt{f-real-real}$ & 11.2 & 0.0 & 100.0 & 8,100,000 & — & 5,000 \\
$\mathtt{full\_arbiter}$ & 100.0 & 100.0 & 100.0 & 3,600,000 & 1,912 & 5,000 \\
$\mathtt{g-real-real}$ & 88.1 & 0.0 & 100.0 & 8,100,000 & — & 15,000 \\
$\mathtt{gf-real-real}$ & 17.4 & 94.9 & 2.7 & 8,100,000 & 72,393 & 15,000 \\
$\mathtt{heim-buechi-real}$ & 98.9 & 0.0 & 100.0 & 8,100,000 & — & 5,000 \\
$\mathtt{lift\_gr1+}$ & 0.0 & 100.0 & 44.2 & 8,100,000 & 19,565 & 30,000 \\
$\mathtt{lift\_gr1}$ & 2.4 & 100.0 & 100.0 & 8,100,000 & 4,758 & 10,000 \\
$\mathtt{lift\_unary\_enc}$ & 1.1 & 100.0 & 100.0 & 8,100,000 & 2,135 & 5,000 \\
$\mathtt{lift}$ & 0.1 & 100.0 & 100.0 & 8,100,000 & 1,976 & 5,000 \\
$\mathtt{lilydemo03}$ & 100.0 & 100.0 & 100.0 & 900,000 & 3,768 & 5,000 \\
$\mathtt{lilydemo04}$ & 100.0 & 100.0 & 39.1 & 2,700,000 & 3,879 & 25,000 \\
$\mathtt{lilydemo05}$ & 100.0 & 100.0 & 100.0 & 3,600,000 & 3,850 & 5,000 \\
$\mathtt{lilydemo06}$ & 100.0 & 100.0 & 100.0 & 6,300,000 & 4,008 & 5,000 \\
$\mathtt{lilydemo07}$ & 100.0 & 100.0 & 100.0 & 4,500,000 & 3,931 & 5,000 \\
$\mathtt{lilydemo08}$ & 100.0 & 100.0 & 100.0 & 900,000 & 905 & 5,000 \\
$\mathtt{lilydemo09}$ & 100.0 & 100.0 & 100.0 & 6,300,000 & 950 & 5,000 \\
$\mathtt{lilydemo10}$ & 100.0 & 100.0 & 100.0 & 900,000 & 1,829 & 5,000 \\
$\mathtt{lilydemo14}$ & 100.0 & 100.0 & 100.0 & 900,000 & 1,844 & 5,000 \\
$\mathtt{lilydemo15}$ & 100.0 & 100.0 & 100.0 & 8,100,000 & 1,923 & 5,000 \\
$\mathtt{lilydemo16}$ & 60.2 & 100.0 & 100.0 & 8,100,000 & 5,455 & 5,000 \\
$\mathtt{lilydemo17}$ & 100.0 & 100.0 & 100.0 & 1,800,000 & 1,857 & 5,000 \\
$\mathtt{lilydemo18}$ & 100.0 & 100.0 & 100.0 & 1,800,000 & 3,880 & 5,000 \\
$\mathtt{lilydemo19}$ & 100.0 & 100.0 & 100.0 & 2,700,000 & 1,871 & 5,000 \\
$\mathtt{lilydemo20}$ & 75.3 & 100.0 & 100.0 & 8,100,000 & 1,833 & 5,000 \\
$\mathtt{lilydemo21}$ & 0.6 & 99.4 & 100.0 & 8,100,000 & 25,482 & 10,000 \\
$\mathtt{lilydemo22}$ & 40.6 & 100.0 & 100.0 & 8,100,000 & 5,026 & 5,000 \\
$\mathtt{lilydemo23}$ & 100.0 & 100.0 & 100.0 & 1,800,000 & 913 & 5,000 \\
$\mathtt{load\_balancer}$ & 74.2 & 100.0 & 100.0 & 8,100,000 & 4,020 & 5,000 \\
$\mathtt{ltl2dba01}$ & 72.3 & 100.0 & 36.4 & 8,100,000 & 3,818 & 10,000 \\
$\mathtt{ltl2dba02}$ & 54.8 & 100.0 & 48.5 & 8,100,000 & 5,515 & 30,000 \\
$\mathtt{ltl2dba03}$ & 98.2 & 100.0 & 21.1 & 8,100,000 & 3,902 & 5,000 \\
$\mathtt{ltl2dba04}$ & 99.5 & 100.0 & 0.5 & 8,100,000 & 3,802 & 5,000 \\
$\mathtt{ltl2dba05}$ & 63.6 & 100.0 & 15.6 & 8,100,000 & 6,040 & 20,000 \\
$\mathtt{ltl2dba06}$ & 19.6 & 100.0 & 13.6 & 8,100,000 & 5,942 & 15,000 \\
$\mathtt{ltl2dba07}$ & 48.2 & 100.0 & 14.9 & 8,100,000 & 156,904 & 15,000 \\
$\mathtt{ltl2dba10}$ & 100.0 & 100.0 & 100.0 & 2,700,000 & 1,852 & 5,000 \\
$\mathtt{ltl2dba12}$ & 84.2 & 100.0 & 0.0 & 8,100,000 & 1,921 & 5,000 \\
$\mathtt{ltl2dba13}$ & 100.0 & 100.0 & 100.0 & 4,500,000 & 3,730 & 5,000 \\
$\mathtt{ltl2dba14}$ & 78.8 & 100.0 & 0.5 & 8,100,000 & 3,797 & 10,000 \\
$\mathtt{ltl2dba16}$ & 31.5 & 100.0 & 100.0 & 8,100,000 & 1,821 & 5,000 \\
$\mathtt{ltl2dba18}$ & 23.1 & 100.0 & 11.3 & 8,100,000 & 3,889 & 10,000 \\
$\mathtt{ltl2dba19}$ & 100.0 & 100.0 & 6.5 & 6,300,000 & 2,090 & 5,000 \\
$\mathtt{ltl2dba20}$ & 7.6 & 100.0 & 8.4 & 8,100,000 & 5,989 & 20,000 \\
$\mathtt{ltl2dba22}$ & 100.0 & 100.0 & 0.0 & 4,500,000 & 915 & 5,000 \\
$\mathtt{ltl2dba23}$ & 100.0 & 100.0 & 0.0 & 8,100,000 & 1,824 & 5,000 \\
$\mathtt{ltl2dba25}$ & 69.9 & 100.0 & 0.4 & 8,100,000 & 3,800 & 15,000 \\
$\mathtt{ltl2dba26}$ & 58.4 & 100.0 & 8.8 & 8,100,000 & 3,772 & 20,000 \\
$\mathtt{ltl2dba\_E}$ & 69.1 & 100.0 & 68.1 & 8,100,000 & 1,912 & 5,000 \\
$\mathtt{ltl2dba\_Q}$ & 67.2 & 100.0 & 90.3 & 8,100,000 & 2,011 & 10,000 \\
$\mathtt{ltl2dba\_U1}$ & 100.0 & 100.0 & 53.2 & 2,700,000 & 1,859 & 25,000 \\
$\mathtt{ltl2dba\_alpha}$ & 100.0 & 100.0 & 100.0 & 3,600,000 & 1,870 & 5,000 \\
$\mathtt{ltl2dba\_beta}$ & 62.5 & 100.0 & 52.4 & 8,100,000 & 7,631 & 10,000 \\
$\mathtt{ltl2dpa01}$ & 100.0 & 100.0 & 100.0 & 2,700,000 & 934 & 5,000 \\
$\mathtt{ltl2dpa02}$ & 100.0 & 100.0 & 8.7 & 3,600,000 & 1,814 & 5,000 \\
$\mathtt{ltl2dpa03}$ & 25.7 & 100.0 & 100.0 & 8,100,000 & 12,887 & 10,000 \\
$\mathtt{ltl2dpa04}$ & 100.0 & 100.0 & 100.0 & 3,600,000 & 1,820 & 5,000 \\
$\mathtt{ltl2dpa06}$ & 100.0 & 100.0 & 0.0 & 4,500,000 & 3,732 & 5,000 \\
$\mathtt{ltl2dpa07}$ & 100.0 & 100.0 & 7.9 & 4,500,000 & 3,693 & 15,000 \\
$\mathtt{ltl2dpa08}$ & 100.0 & 100.0 & 75.6 & 7,200,000 & 1,862 & 5,000 \\
$\mathtt{ltl2dpa09}$ & 1.7 & 100.0 & 8.6 & 8,100,000 & 3,709 & 5,000 \\
$\mathtt{ltl2dpa10}$ & 79.1 & 99.9 & 100.0 & 8,100,000 & 2,937 & 5,000 \\
$\mathtt{ltl2dpa12}$ & 24.6 & 100.0 & 100.0 & 8,100,000 & 2,648 & 5,000 \\
$\mathtt{ltl2dpa13}$ & 79.8 & 100.0 & 64.7 & 8,100,000 & 2,469 & 5,000 \\
$\mathtt{ltl2dpa14}$ & 100.0 & 100.0 & 100.0 & 2,700,000 & 917 & 5,000 \\
$\mathtt{ltl2dpa16}$ & 88.0 & 100.0 & 69.0 & 8,100,000 & 1,899 & 10,000 \\
$\mathtt{ltl2dpa17}$ & 63.2 & 100.0 & 0.0 & 8,100,000 & 3,889 & 5,000 \\
$\mathtt{ltl2dpa18}$ & 85.6 & 100.0 & 0.0 & 8,100,000 & 1,855 & 5,000 \\
$\mathtt{ltl2dpa19}$ & 49.2 & 100.0 & 68.7 & 8,100,000 & 4,417 & 10,000 \\
$\mathtt{ltl2dpa20}$ & 77.2 & 100.0 & 69.5 & 8,100,000 & 1,918 & 25,000 \\
$\mathtt{ltl2dpa23}$ & 100.0 & 100.0 & 100.0 & 1,800,000 & 1,787 & 5,000 \\
$\mathtt{ltl2dpa24}$ & 100.0 & 100.0 & 100.0 & 1,800,000 & 1,819 & 5,000 \\
$\mathtt{robot-tasks-real}$ & 100.0 & 0.0 & 100.0 & 900,000 & — & 5,000 \\
$\mathtt{robot\_grid}$ & 0.0 & 100.0 & 35.3 & 8,100,000 & 16,342 & 30,000 \\
$\mathtt{rw\_arbiter}$ & 100.0 & 100.0 & 37.5 & 5,400,000 & 1,860 & 5,000 \\
$\mathtt{solitary-real}$ & 99.2 & 100.0 & 100.0 & 8,100,000 & 16,781 & 5,000 \\
$\mathtt{tasks-real}$ & 99.2 & 0.0 & 100.0 & 8,100,000 & — & 5,000 \\
{$\mathtt{EscalatorBidirectional}$} & 0.0 & 0.0 & 100.0 & 8,100,000 & 37,258 & 5,000 \\
{$\mathtt{EscalatorCountingInit}$} & 33.6 & 100.0 & 100.0 & 8,100,000 & 3,766 & 5,000 \\
{$\mathtt{EscalatorNonReactive}$} & 100.0 & 0.0 & 100.0 & 900,000 & — & 5,000 \\
$\mathtt{MusicAppMotivating}$ & 2.1 & 100.0 & 2.6 & 8,100,000 & 8,564 & 5,000 \\
$\mathtt{OneCounterInRangeA3}$ & 1.8 & 100.0 & 100.0 & 8,100,000 & 5,490 & 5,000 \\
$\mathtt{TorcsAccelerating}$ & 100.0 & 100.0 & 100.0 & 900,000 & 1,799 & 5,000 \\
$\mathtt{TorcsSteeringSimple}$ & 0.2 & 100.0 & 100.0 & 8,100,000 & 4,726 & 5,000 \\
$\mathtt{TorcsSteeringSmart}$ & 0.0 & 100.0 & 22.2 & 8,100,000 & 6,314 & 25,000 \\
$\mathtt{arbiter-with-failure-}$ \\ $\mathtt{real}$ & 1.8 & 100.0 & 39.2 & 8,100,000 & 154,618 & 5,000 \\
$\mathtt{arbiter\_on\_inpchange}$ & 100.0 & 100.0 & 5.9 & 4,500,000 & 942 & 25,000 \\
$\mathtt{arbiter\_with\_cancel}$ & 100.0 & 100.0 & 0.1 & 1,800,000 & 1,780 & 10,000 \\
$\mathtt{batch-arbiter-real}$ & 20.4 & 100.0 & 100.0 & 8,100,000 & 333,487 & 5,000 \\
$\mathtt{chain-simple-10-}$ \\ $\mathtt{real}$ & 94.8 & 0.0 & 63.0 & 8,100,000 & — & 15,000 \\
$\mathtt{chain-simple-20-}$ \\ $\mathtt{real}$ & 93.0 & 0.0 & 63.7 & 8,100,000 & — & 15,000 \\
$\mathtt{chain-simple-30-}$ \\ $\mathtt{real}$ & 96.3 & 0.0 & 63.1 & 8,100,000 & — & 20,000 \\
$\mathtt{chain-simple-40-}$ \\ $\mathtt{real}$ & 94.6 & 0.0 & 64.2 & 8,100,000 & — & 20,000 \\
$\mathtt{chain-simple-5-real}$ & 96.0 & 0.0 & 42.7 & 8,100,000 & — & 5,000 \\
$\mathtt{chain-simple-50-}$ \\ $\mathtt{real}$ & 94.2 & 0.0 & 64.7 & 8,100,000 & — & 5,000 \\
$\mathtt{chain-simple-60-}$ \\ $\mathtt{real}$ & 94.4 & 0.0 & 64.0 & 8,100,000 & — & 5,000 \\
$\mathtt{chain-simple-70-}$ \\ $\mathtt{real}$ & 95.1 & 0.0 & 64.2 & 8,100,000 & — & 25,000 \\
$\mathtt{chain-simple-param-}$ \\ $\mathtt{70-real}$ & 97.7 & 0.0 & 100.0 & 8,100,000 & — & 5,000 \\
$\mathtt{generalized\_buffer}$ & 4.2 & 99.9 & 0.0 & 8,100,000 & 509,491 & 5,000 \\
$\mathtt{heim-double-x-real}$ & 99.4 & 0.0 & 100.0 & 8,100,000 & — & 5,000 \\
$\mathtt{infinite-race-real}$ & 97.0 & 100.0 & 100.0 & 8,100,000 & 9,989 & 5,000 \\
$\mathtt{infinite-race-unequal-}$ \\ $\mathtt{1-real}$ & 98.8 & 0.0 & 100.0 & 8,100,000 & — & 5,000 \\
$\mathtt{load\_balancer\_unreal2}$ & 72.8 & 100.0 & 100.0 & 8,100,000 & 3,961 & 5,000 \\
$\mathtt{ordered-visits-real}$ & 100.0 & 100.0 & 100.0 & 8,100,000 & 266,715 & 5,000 \\
$\mathtt{precise-reachability-}$ \\ $\mathtt{real}$ & 88.3 & 100.0 & 98.6 & 8,100,000 & 531,210 & 30,000 \\
$\mathtt{robot-cat-real-1d-}$ \\ $\mathtt{real}$ & 98.5 & 0.0 & 100.0 & 8,100,000 & — & 5,000 \\
$\mathtt{robot-grid-commute-}$ \\ $\mathtt{1d-real}$ & 83.3 & 0.0 & 100.0 & 8,100,000 & — & 5,000 \\
$\mathtt{robot-grid-reach-}$ \\ $\mathtt{1d-real}$ & 97.6 & 100.0 & 100.0 & 8,100,000 & 135,518 & 5,000 \\
$\mathtt{robot-grid-reach-}$ \\ $\mathtt{2d-real}$ & 78.9 & 0.0 & 93.3 & 8,100,000 & — & 15,000 \\
$\mathtt{robot-grid-reach-}$ \\ $\mathtt{repeated-with-obstacles-}$ \\ $\mathtt{1d-real}$ & 95.3 & 100.0 & 100.0 & 8,100,000 & 201,089 & 5,000 \\
$\mathtt{robot-grid-reach-}$ \\ $\mathtt{repeated-with-obstacles-}$ \\ $\mathtt{2d-real}$ & 93.3 & 0.0 & 100.0 & 8,100,000 & — & 5,000 \\
$\mathtt{robot\_collect\_samples\_v2-}$ \\ $\mathtt{real}$ & 99.0 & 0.0 & 100.0 & 8,100,000 & — & 5,000 \\
$\mathtt{storage-GF-64-real}$ & 92.3 & 96.3 & 82.2 & 8,100,000 & 1,053,092 & 20,000 \\
$\mathtt{thermostat-F-real}$ & 99.8 & 0.0 & 100.0 & 8,100,000 & — & 5,000 \\
$\mathtt{unordered-visits-}$ \\ $\mathtt{real}$ & 0.6 & 0.0 & 14.6 & 8,100,000 & — & 30,000 \\
$\mathtt{workstation\_resupply\_pb\_1\_pe\_}$ & 56.7 & 100.0 & 100.0 & 8,100,000 & 925 & 5,000 \\
$\mathtt{workstation\_resupply\_pb\_2\_pe\_}$ & 58.4 & 100.0 & 0.1 & 8,100,000 & 2,108 & 20,000 \\
$\mathtt{workstation\_resupply\_pb\_3\_pe\_}$ & 36.0 & 100.0 & 0.2 & 8,100,000 & 5,189 & 10,000 \\
\end{longtable}
\newpage
\subsection{Adaptive Arbitration Results}~\label{app:adaptive_arbitration_results}
\begin{figure}[h]
    \centering
    \includegraphics[width=1\linewidth]{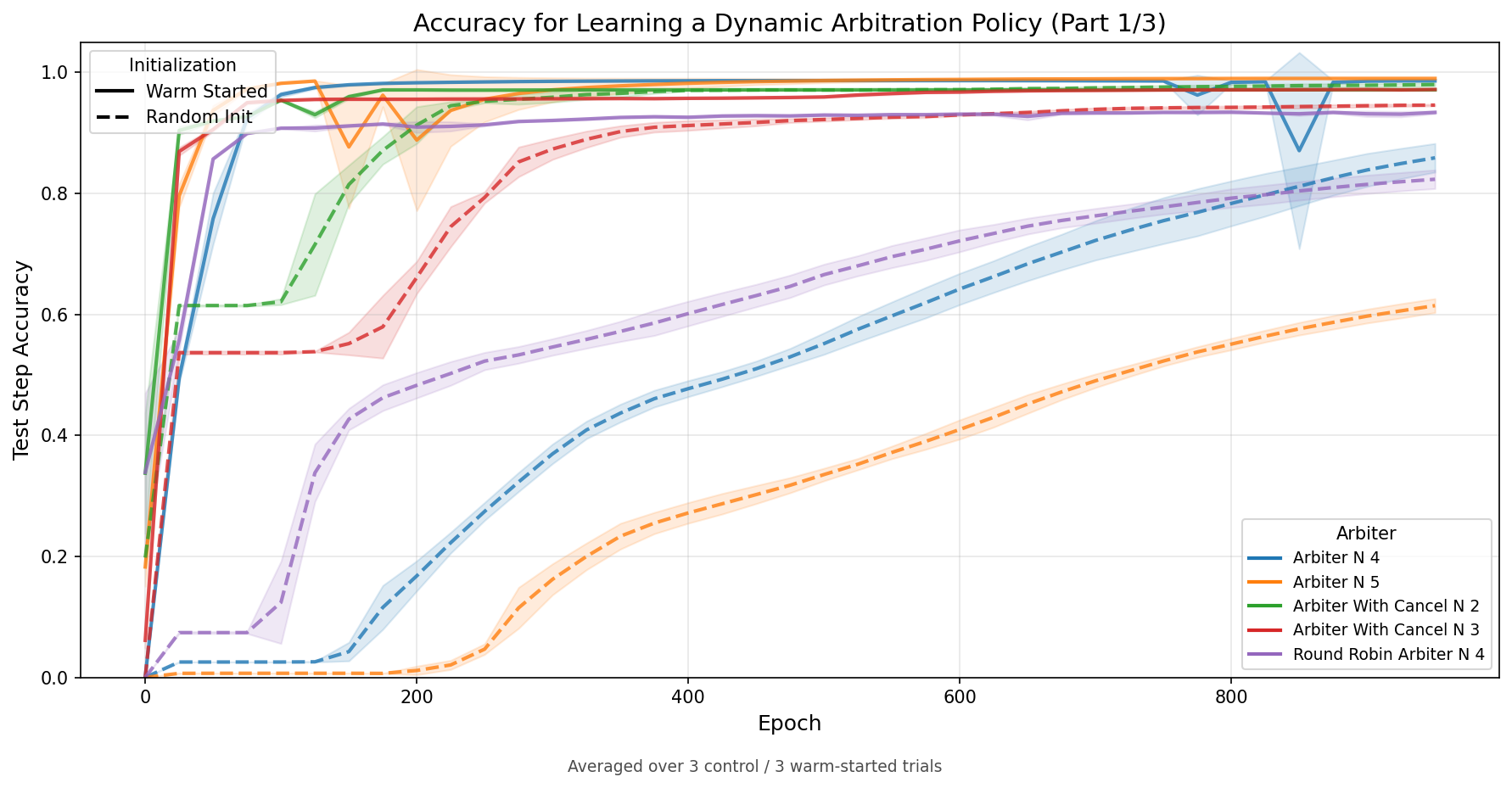}
    \caption{Adaptive arbiter results 1. Shown in main paper}
    \label{fig:adaptivearbiter_res_1}
\end{figure}
\begin{figure}[t]
    \centering
    \includegraphics[width=1\linewidth]{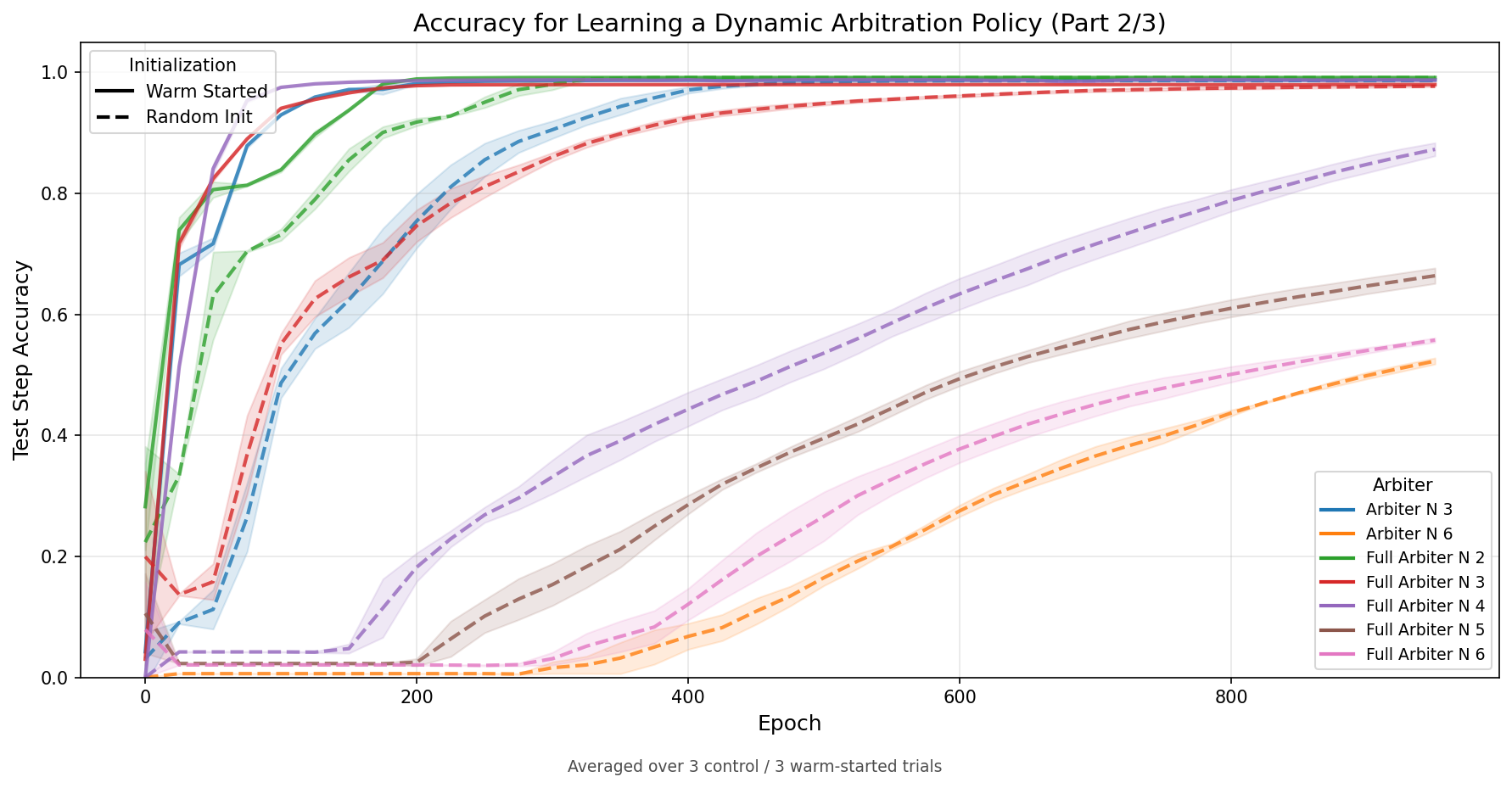}
    \caption{Adaptive arbiter results 2.}
    \label{fig:adaptivearbiter_res_2}
\end{figure}
\begin{figure}[t]
    \centering
    \includegraphics[width=1\linewidth]{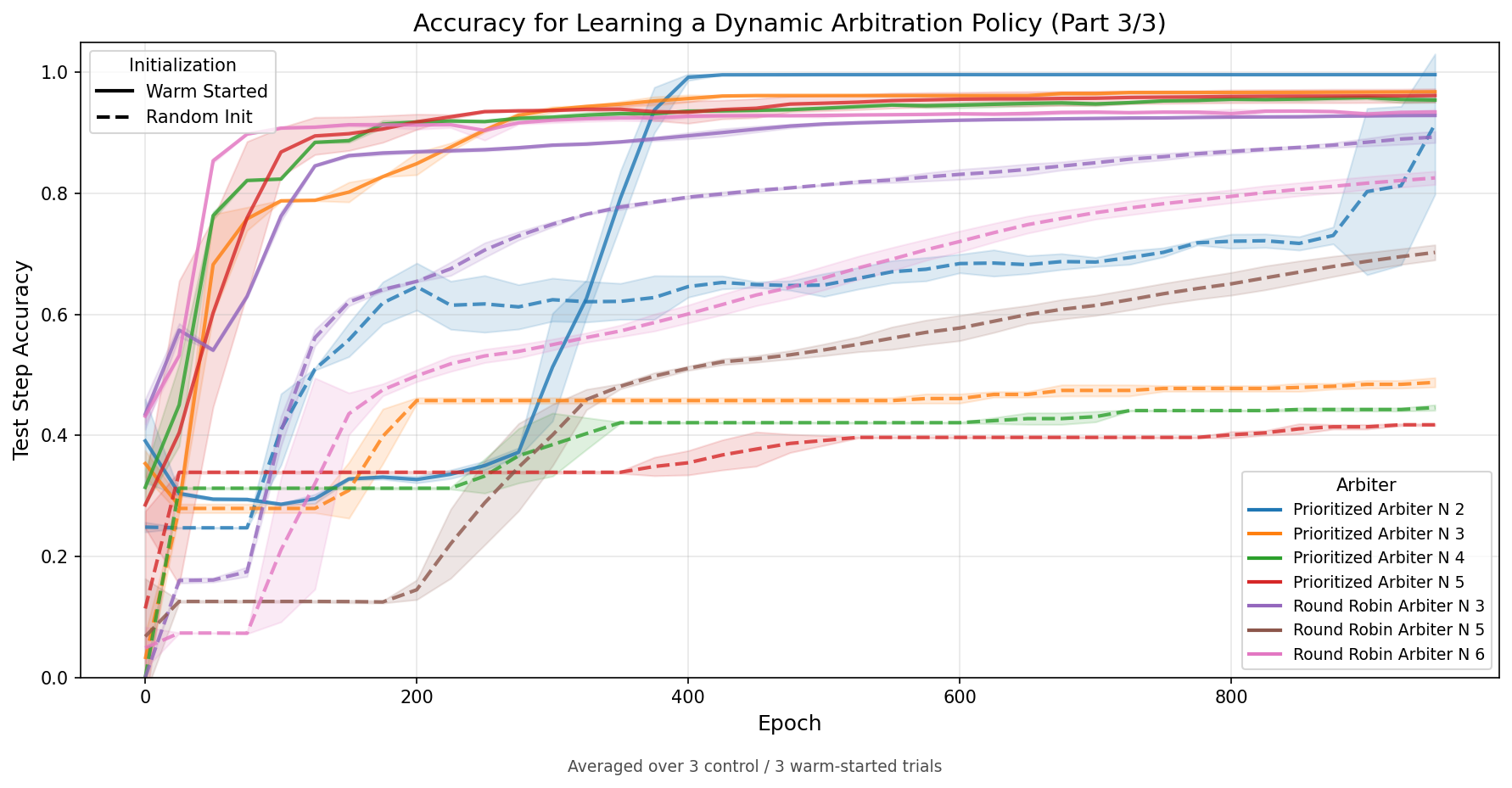}
    \caption{Adaptive arbiter results 3.}
    \label{fig:adaptive_arbiter_res_3}
\end{figure}
\end{document}